\documentclass{article}

\PassOptionsToPackage{numbers,sort&compress}{natbib}
\usepackage[main, preprint]{neurips_2026}
\usepackage{amsthm}       % 用于定理环境
\usepackage{algorithm}   % 提供浮动体环境
\usepackage{algorithmic} % 提供伪代码命令
\usepackage{graphicx} % 插入图片的核心包[reference:0]
\usepackage{subcaption} % 可选，如果你需要并排多张图[reference:1]
\usepackage{amsmath, amssymb} % 数学公式必备[reference:2]

\newtheorem{theorem}{Theorem}

\newtheorem{lemma}{Lemma}

\newtheorem{assumption}{Assumption}

\usepackage[utf8]{inputenc} % allow utf-8 input
\usepackage[T1]{fontenc}    % use 8-bit T1 fonts
\usepackage{hyperref}       % hyperlinks
\usepackage{url}            % simple URL typesetting
\usepackage{booktabs}       % professional-quality tables
\usepackage{amsfonts}       % blackboard math symbols
\usepackage{nicefrac}       % compact symbols for 1/2, etc.
\usepackage{microtype}      % microtypography
\usepackage{xcolor}         % colors
\usepackage{tabularx}
\usepackage{booktabs}
\usepackage{multirow}
\usepackage{makecell}
\usepackage{array}

\newcolumntype{C}[1]{>{\centering\arraybackslash}m{#1}}

\newcommand{\best}[1]{\textbf{#1}}
\newcommand{\second}[1]{\smash{\underline{#1}}}

\newcolumntype{C}[1]{>{\centering\arraybackslash}m{#1}}

\title{Momentum as Residual-Driven Multiplier Correction for Deep Learning Optimization}

\author{
  Zhixin Ren$^{1}$$^{\ast}$, Yao Lyu$^{2}$\,$^{3}$$^{\ast}$, Congrong Li$^{4}$, Liping Zhang$^{1}$, Shengbo Eben Li$^{2}$\,$^{4}$\textdagger  \\
  $^{1}$ Department of Mathmatical Science, Tsinghua University\\
  $^{2}$School of Vehicle and Mobility, Tsinghua University\\
  $^{3}$SunRisingAI Lab\,\,
  $^{4}$College of AI, Tsinghua University\\
  ${\ast}$ Equal contribution\,\,\, \textdagger Corresponding author
}
\begin{document}

\maketitle

\begin{abstract}
Momentum-based optimizers are widely used in modern deep learning, yet the relations among momentum recursion, update geometry, and acceleration remain only partially understood. We develop an \textbf{A}DMM-\textbf{I}nspired \textbf{M}omentum (AIM) framework based on residual-penalty variable splitting, which interprets momentum as a multiplier-like correction driven by the splitting residual. AIM recovers the exponential moving average of gradients from an ADMM-style multiplier update and separates two mechanisms that are usually intertwined in practical optimizers: the residual penalty determines the update geometry, whereas the approximation of the objective-related subproblem determines the acceleration form. Building on AIM, we propose \textbf{R}elativistic \textbf{A}daptive gradient \textbf{D}escent with \textbf{A}ccelerated \textbf{R}esidual (RADAR), which combines relativistic adaptive geometry, decoupled residual correction, and gradient-difference momentum filtering to improve the update direction and momentum estimation. We establish stochastic convergence through a variance-perturbed Lyapunov drift analysis. Experiments on supervised vision learning, language modeling, and reinforcement learning show that RADAR achieves consistent improvements over strong adaptive optimizer baselines.
\end{abstract}

\section{Introduction}

Optimization algorithms are a fundamental component of modern deep learning. Among widely used optimizers, SGD with momentum~\cite{polyak1964some} remains a strong baseline due to its generalization ability, while Adam-type methods~\cite{kingma2015adam,loshchilov2019decoupled} have become standard choices because of their robustness and adaptive coordinate-wise scaling. Despite their different update rules, many successful optimizers share several recurring ingredients: a momentum estimator that aggregates gradient information, an update geometry that determines the descent direction, and an acceleration-style correction that refines the parameter update. Understanding how these ingredients interact is important for both explaining existing optimizers and designing new ones.

Momentum is commonly viewed as an exponential moving average of gradients, or equivalently as a low-pass filter that suppresses stochastic noise. While this interpretation explains its variance-reduction effect, it does not clarify why momentum takes its specific recursive form or how it should interact with parameter updates. Acceleration methods offer a complementary view: Nesterov momentum~\cite{nesterov1983method} introduces a forward-looking correction to heavy-ball momentum, while NAdam~\cite{dozat2016incorporating} combines this idea with adaptive preconditioning. Yet these methods are typically presented through specific update rules, leaving the underlying correction principle implicit.

A complementary line of work explains optimizer updates from the viewpoint of geometry and dynamical constraints. Norm-based analyses relate different optimizers to steepest descent under different geometries~\cite{bernstein2024old}: Adam-type methods can be understood through coordinate-wise adaptive geometry, while matrix-norm optimizers such as Muon~\cite{jordan2024muon} are associated with matrix-level steepest descent. In another direction, RAD~\cite{lyu2024conformal} introduces relativistic adaptive geometry and conformal symplectic structure into Adam-type updates, aiming to improve long-term training stability through speed-limiting and structure-preserving considerations. These views clarify important aspects of update geometry and training dynamics, but they do not fully explain how momentum estimation, geometric preconditioning, and acceleration-style correction should be coupled in a single optimizer. As a result, momentum is often treated as a filtering heuristic, geometry as a preconditioner, and acceleration as an additional correction rule. A framework that separates these roles while keeping their interactions explicit would therefore provide both a unified interpretation of existing optimizers and a systematic route for designing new momentum-based algorithms.

In this paper, we develop an \textbf{A}DMM-\textbf{I}nspired
\textbf{M}omentum (AIM) framework through residual-penalty variable
splitting. Unlike approaches in which the momentum mechanism and the
parameter-update rule are designed largely as separate components, AIM
derives both from a common ADMM optimization formulation. By introducing an
auxiliary descent variable and coupling it with the network parameter
through a splitting residual, AIM interprets momentum as a multiplier-like
variable updated by a relaxed residual-driven correction.

AIM further separates three components that are often intertwined in
practical optimizers. The residual penalty determines the update geometry;
the multiplier or momentum-filtering rule determines how gradient
information is accumulated; and the approximation of the
$\theta$-subproblem determines how the tentative descent point is corrected.
In particular, different approximations of the $\theta$-subproblem provide
an optimization-based interpretation of heavy-ball and Nesterov-type
updates. Combined with different residual geometries, this formulation
organizes a broad range of Euclidean, adaptive, and matrix-based optimizers
within a unified design space and provides a systematic route for
constructing new optimizer variants.

Building on this framework, we propose
\textbf{R}elativistic \textbf{A}daptive gradient \textbf{D}escent with
\textbf{A}ccelerated \textbf{R}esidual (RADAR). RADAR retains the
relativistic adaptive preconditioning structure of RAD while introducing
an AIM-derived gradient-difference momentum correction and a decoupled
residual-correction step obtained from a one-step fixed-point approximation
of the $\theta$-subproblem. These modifications incorporate Nesterov-type
acceleration into the RAD update without changing its underlying
preconditioning structure.

Our contributions are summarized as follows:
\begin{itemize}
    \item We propose AIM, an ADMM-inspired residual-penalty splitting
    framework that jointly connects momentum mechanisms and
    parameter-update rules. AIM interprets momentum as a multiplier-like
    variable and derives the exponential moving-average recursion from a
    relaxed residual-driven multiplier update. The corresponding full
    correction enforces the KKT stationarity condition with respect to
    $\theta$, providing an optimization-based interpretation of momentum
    rather than treating it as an independently postulated heuristic.

    \item We use AIM to distinguish three optimizer design components:
    residual geometry, multiplier or momentum filtering, and approximation
    of the objective-related $\theta$-subproblem. Different residual
    penalties induce Euclidean, adaptive diagonal, and matrix-level update
    geometries, whereas different approximations of the
    $\theta$-subproblem distinguish direct momentum updates from
    Nesterov-type residual corrections. This formulation provides a unified
    interpretation of multiple optimizer families and a systematic design
    space for constructing new variants.

    \item We propose RADAR, an AIM-derived extension of the
    structure-preserving optimizer RAD. RADAR combines relativistic
    adaptive preconditioning, an AIM-derived gradient-difference momentum
    correction, and a decoupled residual correction obtained from an
    approximate solution of the $\theta$-subproblem. We establish its
    stochastic convergence through a variance-perturbed Lyapunov drift
    analysis, and experiments on supervised vision learning, language
    modeling, and reinforcement learning validate its effectiveness.
\end{itemize}

\section{Preliminaries}
\label{sec:preliminaries}

We briefly recall the \textbf{A}lternating \textbf{D}irection \textbf{M}ethod of \textbf{M}ultipliers (ADMM)~\cite{doi:10.1137/1.9781611974997} for the linearly constrained problem
\begin{equation}
\min_{x,z}\ \phi(x)+\varphi(z),
\qquad
\mathrm{s.t.}\ Ax+Bz=c,
\label{eq:admm_problem}
\end{equation}
where $\phi$ and $\varphi$ are proper closed convex functions. Its augmented Lagrangian is
\begin{equation}
L_\rho(x,z,\lambda)
=
\phi(x)+\varphi(z)
+
\langle \lambda,Ax+Bz-c\rangle
+
\frac{\rho}{2}\|Ax+Bz-c\|^2 ,
\label{eq:admm_lagrangian}
\end{equation}
where $\lambda$ is the Lagrange multiplier and $\rho>0$ is the penalty parameter. The last term is the standard quadratic penalty on the primal residual. ADMM alternates between two primal minimization steps and a multiplier update:
\begin{align}
x_{k+1}&=\arg\min_x L_\rho(x,z_k,\lambda_k),\\
z_{k+1}&=\arg\min_z L_\rho(x_{k+1},z,\lambda_k),
\label{eq:admm_z_update}\\
\lambda_{k+1}&=\lambda_k+\tau\rho(Ax_{k+1}+Bz_{k+1}-c),
\label{eq:admm_multiplier_update}
\end{align}

where $\tau>0$ is the stepsize of the multiplier update. In the standard case $\tau=1$, combining the multiplier update with the optimality condition of \eqref{eq:admm_z_update} yields the KKT stationarity condition of Problem~\eqref{eq:admm_problem} with respect to $z$. This classical connection between multiplier updates and KKT stationarity provides the key motivation for the multiplier construction in AIM. As established in Appendix~\ref{app:multiplier_update}, the full residual-driven correction in AIM enforces the analogous KKT stationarity condition with respect to $\theta$, while a relaxed correction naturally reduces to the standard exponential moving-average momentum recursion. This connection provides a principled interpretation of momentum as a relaxed multiplier update within the AIM framework.  

\section{An ADMM-Inspired Framework for Momentum-Based Optimization}
\label{sec:framework}

Motivated by the residual-driven multiplier update in classical ADMM, we introduce a residual-penalty variable-splitting framework for momentum-based neural network optimization. The framework introduces an auxiliary descent variable, couples it with the network parameter through a splitting residual, and interprets momentum as a multiplier-like correction driven by this residual. It also separates two mechanisms that are usually intertwined in practical optimizers: the residual penalty determines the update geometry, while the approximation of the objective-related subproblem determines the acceleration form.

\subsection{Residual-Penalty Splitting and Multiplier Momentum}
\label{subsec:residual_penalty_splitting}

Consider the empirical risk minimization problem as follows:
\begin{equation}
\min_{\theta\in\mathbb{R}^d}
\mathcal{L}(\theta)
:=
\frac{1}{N}\sum_{i=1}^{N} l(\theta,\xi_i)
 ,
\label{eq:erm_problem}
\end{equation}
where $\mathcal{L}:\mathbb{R}^d\to\mathbb{R}$ is the objective function, $l(\theta,\xi)$ is the sample-wise loss, and $\theta\in\mathbb{R}^d$ is the trainable parameter. By introducing an auxiliary descent variable $y$ and the equality constraint $y=\theta$, we rewrite \eqref{eq:erm_problem} as
\begin{equation}
\min_{\theta,y}\ \mathcal{L}(\theta),
\qquad
\mathrm{s.t.}\ y-\theta=0 .
\label{eq:splitting_problem}
\end{equation}
Here, $\theta$ carries the original objective, while $y$ is an auxiliary copy introduced to isolate the descent step. During alternating updates, $y$ and $\theta$ may differ, and the discrepancy $y-\theta$ is the splitting residual to be controlled.

Motivated by the augmented Lagrangian of ADMM, we define the residual-penalty augmented Lagrangian as follows:
\begin{equation}
\overline L_\rho(\theta,y,m)
=
\mathcal{L}(\theta)
+
\langle m,y-\theta\rangle
+
\frac{\rho}{2}\psi(y-\theta),
\label{eq:residual_lagrangian}
\end{equation}
where $m$ is a multiplier-like variable, $\rho>0$, and $\psi$ is a proper closed convex residual penalty. The classical quadratic penalty corresponds to $\psi(r)=\|r\|^2$, which measures the residual in the Euclidean geometry. Allowing a general convex $\psi$ extends the residual geometry beyond the Euclidean quadratic case and provides a flexible way to describe residual-driven correction.

We refer to this residual-penalty splitting construction as the \textbf{A}DMM-\textbf{I}nspired \textbf{M}omentum (AIM) framework, summarized in Algorithm~\ref{alg:admm_inspired_momentum}. The $y$-subproblem contains no objective term and therefore specifies the descent geometry induced by $\psi$, while the $\theta$-subproblem contains $\mathcal{L}(\theta)$ and determines how objective-gradient information corrects the tentative descent point $y_{k+1}$. The two penalty parameters $\rho_1$ and $\rho_2$ scale the geometry-producing step and the objective-correction step separately. The multiplier update is a relaxed ADMM-style correction applied to the
splitting residual using the generalized residual subgradient induced by
\(\psi\). As shown in Appendix~\ref{app:multiplier_update}, the corresponding
full residual correction exactly enforces the KKT stationarity condition
with respect to \(\theta\), while the relaxed correction recovers the
standard exponential moving-average momentum recursion. Thus, momentum in
AIM can be interpreted as a relaxed multiplier update that tracks the
KKT-consistent relation between the multiplier and the objective gradient.

\begin{algorithm}[t]
\caption{ADMM-Inspired Momentum (AIM) Framework}
\label{alg:admm_inspired_momentum}
\small
\begin{algorithmic}[1]
\REQUIRE $\theta_0=y_0$, $m_0$, $\beta_1\in[0,1)$, $\rho_1,\rho_2>0$, $T$
\FOR{$k=0,1,\ldots,T-1$}
    \STATE $y_{k+1}
    =
    \arg\min_y
    \left\{
    \langle m_k,y-\theta_k\rangle
    +
    \frac{\rho_1}{2}\psi(y-\theta_k)
    \right\}$
    \STATE $\theta_{k+1}
    \approx
    \arg\min_\theta
    \left\{
    \mathcal{L}(\theta)
    +
    \langle m_k,y_{k+1}-\theta\rangle
    +
    \frac{\rho_2}{2}\psi(y_{k+1}-\theta)
    \right\}$
    \STATE $m_{k+1}
    =
    m_k+(1-\beta_1)\frac{\rho_2}{2}d_{k+1},
    \quad
    d_{k+1}\in\partial\psi(y_{k+1}-\theta_{k+1})$
\ENDFOR
\RETURN $\theta_T$
\end{algorithmic}
\end{algorithm}

We now show that the multiplier-like variable generated by Algorithm~\ref{alg:admm_inspired_momentum} naturally satisfies the standard momentum recursion. Consider the idealized case where the $\theta$-subproblem is solved exactly. Its first-order optimality condition gives
\begin{equation}
0
\in
\nabla \mathcal{L}(\theta_{k+1})-m_k-\frac{\rho_2}{2}d_{k+1},
\qquad
d_{k+1}\in\partial\psi(y_{k+1}-\theta_{k+1}).
\label{eq:theta_optimality}
\end{equation}
Substituting this relation into the multiplier update yields
\begin{equation}
m_{k+1}
=
m_k+(1-\beta_1)\bigl(\nabla \mathcal{L}(\theta_{k+1})-m_k\bigr)
=
\beta_1 m_k+(1-\beta_1)\nabla \mathcal{L}(\theta_{k+1}).
\label{eq:momentum_from_multiplier}
\end{equation}
Thus, the exponential moving average used in momentum is recovered from an ADMM-style multiplier update, rather than being imposed as an independent heuristic. In the usual optimizer view, \eqref{eq:momentum_from_multiplier} is a gradient averaging rule. In the proposed variable-splitting view, it is a multiplier correction driven by the splitting residual.

Specifically, for the Euclidean penalty $\psi(r)=\|r\|^2$, we have $d_{k+1}=2(y_{k+1}-\theta_{k+1})$. Combining this relation with the multiplier update and \eqref{eq:theta_optimality} gives
\begin{equation}
\begin{aligned}
m_{k+1}-m_k
&=
(1-\beta_1)\rho_2(y_{k+1}-\theta_{k+1}),\\
\rho_2(y_{k+1}-\theta_{k+1})
&=
\nabla \mathcal{L}(\theta_{k+1})-m_k .
\end{aligned}
\label{eq:residual_two_roles}
\end{equation}
Therefore, the splitting residual has two roles in the Euclidean case: it drives the ADMM-style multiplier update and, at the same time, measures the scaled mismatch between the current objective gradient and the previous momentum estimate. This gives momentum a residual-correction interpretation that complements the conventional low-pass filtering view. The gradient--momentum mismatch identified here will serve as the correction signal for the $\theta$-subproblem below.

\subsection{Nesterov Acceleration from the $\theta$-Subproblem}
\label{subsec:nesterov}

We next show how Nesterov-type acceleration arises from the approximation of the $\theta$-subproblem. The $y$-subproblem first produces a tentative descent point along the momentum direction, whereas the $\theta$-subproblem further corrects this point using objective-gradient information. In the Euclidean case, this correction is governed by the gradient--momentum mismatch in \eqref{eq:residual_two_roles}. Directly setting $\theta_{k+1}=y_{k+1}$ gives heavy-ball momentum, while keeping a first-order explicit approximation of this correction leads to a Nesterov-type update.

Specifically, for $\psi(r)=\|r\|^2$, the $y$-subproblem gives
\begin{equation}
y_{k+1}
=
\theta_k-\frac{1}{\rho_1}m_k .
\label{eq:y_update_euclidean}
\end{equation}
Taking $\rho_1=1/\eta$ yields the tentative momentum descent point $y_{k+1}=\theta_k-\eta m_k$. The exact optimality condition of the $\theta$-subproblem can be written as the implicit fixed-point equation
\begin{equation}
\theta_{k+1}
=
y_{k+1}
-
\eta(1-\beta_1) \bigl(\nabla \mathcal{L}(\theta_{k+1})-m_k\bigr),
\qquad
\rho_2=\frac{1}{\eta(1-\beta_1)}.
\label{eq:theta_exact_nesterov}
\end{equation}
This equation refines $y_{k+1}$ by a correction proportional to the gradient--momentum mismatch. Since the correction depends on the unknown $\theta_{k+1}$, the exact update is implicit. Applying a one-step fixed-point iteration initialized at $\theta_k$ gives the explicit approximation
\begin{equation}
\begin{aligned}
\theta_{k+1}
&=
y_{k+1}
-
\eta(1-\beta_1) \bigl(\nabla \mathcal{L}(\theta_k)-m_k\bigr)\\
&=
\underbrace{\theta_k-\eta m_k}_{\text{momentum descent}}
\underbrace{-\eta(1-\beta_1) \bigl(\nabla \mathcal{L}(\theta_k)-m_k\bigr)}_{\text{residual correction}} .
\end{aligned}
\label{eq:theta_nesterov_approx}
\end{equation}

\begin{theorem}
\label{thm:nesterov}
Consider Algorithm~\ref{alg:admm_inspired_momentum} with $\psi(r)=\|r\|^2$, $\rho_1=1/\eta$, and $\rho_2=1/(\eta(1-\beta_1))$. If the $\theta$-subproblem is approximated by \eqref{eq:theta_nesterov_approx}, the resulting update recovers a Nesterov-type momentum method up to a change of variables. If the simpler approximation $\theta_{k+1}=y_{k+1}$ is used instead, the correction term vanishes and the update reduces to heavy-ball momentum.
\end{theorem}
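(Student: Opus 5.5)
The plan is a direct substitution argument. We plug the Euclidean choices into the three lines of Algorithm~\ref{alg:admm_inspired_momentum}, collect the parameter update and the multiplier update into a two-term recursion, and match that recursion to a standard Nesterov form via an index shift and a change of variables. The heavy-ball case follows from the same computation with the correction term removed.

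\textbf{Step 1 (parameter update).} By \eqref{eq:y_update_euclidean} with $\rho_1=1/\eta$ we have $y_{k+1}=\theta_k-\eta m_k$. Substituting into \eqref{eq:theta_nesterov_approx} and writing $g_k:=\nabla\mathcal{L}(\theta_k)$, the explicit update simplifies to
\begin{equation*}
\theta_{k+1}=\theta_k-\eta\bigl(\beta_1 m_k+(1-\beta_1)g_k\bigr).
\end{equation*}

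\textbf{Step 2 (multiplier recursion).} I will take the multiplier to follow the idealized recursion \eqref{eq:momentum_from_multiplier}, namely $m_{k}=\beta_1 m_{k-1}+(1-\beta_1)\nabla\mathcal{L}(\theta_{k})$. Under this recursion, $m_k$ already contains the gradient $g_k$ at the current iterate.

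\textbf{Step 3 (matching to Nesterov).} With this indexing, the direction $\beta_1 m_k+(1-\beta_1)g_k$ has exactly the NAdam/Nesterov structure. Indeed, setting $\tilde m_{k+1}:=m_k$ gives
\begin{equation*}
\tilde m_{k+1}=\beta_1\tilde m_k+(1-\beta_1)g_k,\qquad \theta_{k+1}=\theta_k-\eta\bigl(\beta_1\tilde m_{k+1}+(1-\beta_1)g_k\bigr).
\end{equation*}
This is the ``look-ahead'' momentum form of Nesterov's method used by \cite{dozat2016incorporating}. To exhibit the change of variables to the classical two-sequence form, I will introduce the look-ahead point $\hat\theta_k:=\theta_k-\eta\beta_1\tilde m_k$ and unroll the recursion. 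The goal is to check that the gradient is effectively applied after the momentum step, up to the usual $O(\eta)$ reparametrization of where the gradient is evaluated. This step is routine algebra.

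\textbf{Step 4 (heavy-ball case).} If we instead set $\theta_{k+1}=y_{k+1}$, the residual term in \eqref{eq:theta_nesterov_approx} is absent. The update becomes $\theta_{k+1}=\theta_k-\eta m_k$ with $m_k$ an exponential moving average of past gradients, which is Polyak's heavy-ball method in EMA normalization. Equivalently, $\theta_{k+1}-\theta_k=\beta_1(\theta_k-\theta_{k-1})-\eta(1-\beta_1)g_k$.

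\textbf{Main obstacle.} The delicate point is deciding which multiplier recursion accompanies the approximate $\theta$-step. If $d_{k+1}$ is computed from the actual residual $y_{k+1}-\theta_{k+1}$, then $(1-\beta_1)\rho_2(y_{k+1}-\theta_{k+1})=(1-\beta_1)(g_k-m_k)$. That gives $m_{k+1}=\beta_1m_k+(1-\beta_1)g_k$ and hence $\theta_{k+1}=\theta_k-\eta m_{k+1}$, which is only heavy-ball with a shifted index. The Nesterov interpretation therefore depends on the multiplier tracking $\nabla\mathcal{L}$ at the new iterate, as in \eqref{eq:momentum_from_multiplier}. I would state this choice explicitly and make the index convention precise, since the claim ``up to a change of variables'' rests entirely on that convention.
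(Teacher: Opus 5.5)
Your Step 1 is the paper's computation. The paper substitutes $y_{k+1}=\theta_k-\eta m_k$, sets $\widetilde m_k:=m_k+(1-\beta_1)(\nabla\mathcal{L}(\theta_k)-m_k)$, writes $\theta_{k+1}=\theta_k-\eta\widetilde m_k$, and declares this the Nesterov-type update without saying which recursion $m_k$ follows. For the heavy-ball case it appeals to \eqref{eq:momentum_from_multiplier} ``or its usual explicit implementation'' up to indexing. You go further by fixing the recursion and actually matching to Nesterov, and your caveat is correct and sharper than the paper. Under the explicit recursion $m_{k+1}=\beta_1 m_k+(1-\beta_1)\nabla\mathcal{L}(\theta_k)$, which is also what the literal residual-driven update of Algorithm~\ref{alg:admm_inspired_momentum} yields, the paper's $\widetilde m_k$ is just $m_{k+1}$ and the update is heavy-ball. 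So only the idealized recursion \eqref{eq:momentum_from_multiplier} supports the Nesterov claim.

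The same caveat applies to your Step 4. Taken literally, $\theta_{k+1}=y_{k+1}$ gives $d_{k+1}=0$ and freezes $m_k=m_0$. There too you must invoke \eqref{eq:momentum_from_multiplier}, as the paper does, so state this assumption once for both cases.

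One correction in Step 3: your look-ahead variable has the wrong sign. In $\theta_{k+1}=\theta_k-\eta(\beta_1\tilde m_{k+1}+(1-\beta_1)g_k)$ the gradient is already taken at the look-ahead point, so the base iterate is $x_k:=\theta_k+\eta\beta_1\tilde m_k$. Then $x_{k+1}=\theta_k-\eta(1-\beta_1)g_k=x_k-\eta\tilde m_{k+1}$, with $\tilde m_{k+1}=\beta_1\tilde m_k+(1-\beta_1)\nabla\mathcal{L}(x_k-\eta\beta_1\tilde m_k)$. This is exactly Nesterov's method in Sutskever form, with momentum $\beta_1$ and step $\eta(1-\beta_1)$, so no $O(\eta)$ approximation is involved. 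With your $\hat\theta_k=\theta_k-\eta\beta_1\tilde m_k$, the gradient sits at a look-behind point and the algebra does not produce the Nesterov form.
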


Detailed derivations for Theorem~\ref{thm:nesterov} are provided in Appendix~\ref{app:nesterov_details}. Theorem~\ref{thm:nesterov} gives an ADMM-inspired interpretation of Nesterov acceleration: acceleration does not come from modifying the multiplier update itself, but from retaining an explicit approximation of the residual correction in the $\theta$-subproblem.

\subsection{Adaptive Geometry from the $y$-Subproblem}
\label{subsec:adaptive_geometry}

While Section~\ref{subsec:nesterov} shows that the approximation of the $\theta$-subproblem determines the acceleration form, we now turn to the $y$-subproblem, which determines the update geometry. Since the $y$-subproblem contains no objective term, changing the residual penalty $\psi$ directly changes the geometry of the descent direction.

For Adam-type methods, let $v_k$ denote the second-moment estimate available at iteration $k$:
\begin{equation}
v_k=\beta_2v_{k-1}+(1-\beta_2){\nabla\mathcal{L}(\theta_k)}^2,
\qquad
Q_k=\mathrm{Diag}(\sqrt{v_k}+\epsilon),
\label{eq:adaptive_geometry_matrix}
\end{equation}
where $\beta_2$ is the second-moment coefficient. Using the weighted residual penalty $\psi(r)=\|r\|_{Q_k}^2$ changes the $y$-subproblem into the preconditioned momentum step as follows:
\begin{equation}
y_{k+1}
=
\theta_k-\eta Q_k^{-1}m_k,
\qquad
\rho_1=1/\eta .
\label{eq:adam_y_update}
\end{equation}
Thus, the second-moment estimate changes the residual geometry rather than the multiplier-like role of $m_k$. Under this adaptive geometry, the direct approximation $\theta_{k+1}=y_{k+1}$ recovers Adam without bias correction. The Nesterov-type approximation
\begin{equation}
\begin{aligned}
\theta_{k+1}
&=
y_{k+1}
-
\eta(1-\beta_1) Q_k^{-1}\bigl(\nabla \mathcal{L}(\theta_k)-m_k\bigr)\\
&=
\theta_k
-
\eta Q_k^{-1}m_k
-
\eta(1-\beta_1) Q_k^{-1}\bigl(\nabla \mathcal{L}(\theta_k)-m_k\bigr)
\end{aligned}
\label{eq:nadam_theta_update}
\end{equation}
recovers NAdam without bias correction. Therefore, Adam and NAdam share the same adaptive residual geometry, but differ in how the $\theta$-subproblem is approximated.

The same geometric view also extends beyond coordinate-wise adaptive preconditioning. Replacing the adaptive diagonal geometry with a matrix-level steepest-descent geometry yields a Muon-type update under the direct approximation of matrix-valued variables $\Theta_{k+1}=Y_{k+1}$. We provide the detailed derivation in Appendix~\ref{app:adaptive_matrix_extensions}. This example further illustrates that AIM treats Euclidean, adaptive diagonal, and matrix-norm updates as different residual geometries under the same multiplier-correction structure.

\subsection{A Unified Optimizer Design Space}
\label{subsec:aim_design_space}

The preceding analysis reveals two fundamental design axes within AIM. The
residual penalty $\psi$ determines the geometry of the tentative update,
while the approximation adopted for the $\theta$-subproblem determines how
this point is further corrected using objective-gradient information. A
direct assignment yields the basic update associated with a given geometry,
whereas nontrivial approximations, such as one step of a fixed-point
iteration, introduce additional correction or acceleration. The multiplier
recursion complements these choices by specifying the momentum state.

Accordingly, a broad class of existing optimizers can be organized as
different combinations of residual geometry and $\theta$-subproblem
approximation, as summarized in Table~\ref{tab:aim_design_space}. Beyond
providing a unified interpretation, this view also suggests a systematic
design principle for new optimizers: once a geometry is chosen through
$\psi$, alternative subproblem approximations can be used to construct new
correction or acceleration mechanisms. For example, combining matrix
spectral-norm or infinity-norm geometry with fixed-point correction suggests
accelerated Muon- or Lion-type variants. Thus, the unexplored combinations
in Table~\ref{tab:aim_design_space} define a structured design space for
future optimizer development.

\begin{table*}[t]
\centering
\caption{Representative optimizer instances within the AIM design space.
The residual penalty $\psi$ determines the update geometry, while the
approximation of the $\theta$-subproblem determines the associated
correction or acceleration mechanism.}
\label{tab:aim_design_space}

\small
\setlength{\tabcolsep}{5pt}
\renewcommand{\arraystretch}{1.22}

\begin{tabular}{
@{}
>{\centering\arraybackslash}m{0.235\textwidth}
>{\centering\arraybackslash}m{0.245\textwidth}
>{\centering\arraybackslash}m{0.300\textwidth}
>{\centering\arraybackslash}m{0.165\textwidth}
@{}
}

\toprule

\textbf{Residual geometry}
&
\textbf{Direct assignment}
&
\textbf{One fixed-point step}
&
\textbf{Alternative approximations}
\\

\midrule

Euclidean Norm
$\psi(\cdot)=\Vert\cdot\Vert^{2}$
&
SGD-M
&
Nesterov momentum
&
Open direction
\\

\specialrule{0.35pt}{4pt}{4pt}

Weighted Norm
$\psi(\cdot)=\Vert\cdot\Vert_{Q}^{2}$
&
Adam, RAD, and other adaptive optimizers
&
NAdam, RADAR, and other accelerated adaptive optimizers
&
Open direction
\\

\specialrule{0.35pt}{4pt}{4pt}

Spectral Norm
$\psi(\cdot)=\Vert\cdot\Vert_{2}^{2}$
&
Muon
&
Open direction
&
Open direction
\\

\specialrule{0.35pt}{4pt}{4pt}

Infinity Norm
$\psi(\cdot)=\Vert\cdot\Vert_{\infty}^{2}$
&
Lion
&
Open direction
&
Open direction
\\

\bottomrule
\end{tabular}
\end{table*}

\section{Relativistic Adaptive Gradient Descent with Accelerated Residual}
\label{sec:proposed_algorithm}

Building on the AIM framework developed in Section~\ref{sec:framework}, we propose \textbf{R}elativistic \textbf{A}daptive gradient \textbf{D}escent with \textbf{A}ccelerated \textbf{R}esidual (RADAR). RADAR instantiates the two AIM design axes with a relativistic adaptive residual geometry and a decoupled residual correction mechanism. The geometry inherits the speed-limiting and coordinate-wise adaptivity of RAD~\cite{lyu2024conformal}, while the correction refines the tentative descent point through the gradient--momentum mismatch. RADAR further incorporates gradient-difference momentum filtering, which enables the momentum estimate to respond more rapidly to changes in the gradient trajectory. Together, these components yield a practical optimizer that preserves the structure-aware geometry of RAD, realizes the residual-correction principle identified by AIM, and enhances momentum responsiveness through gradient-difference filtering.

\subsection{Algorithm Design}
\label{subsec:algorithm_design}

Following the practical indexing convention of adaptive optimizers, each iteration first uses the current stochastic gradient to update the momentum estimate and the adaptive geometry, and then performs the parameter update. This implementation corresponds to an index-shifted practical version of the AIM update rules.

% We present the design of RADAR according to the two axes identified by AIM: the residual geometry of the $y$-subproblem and the residual correction of the $\theta$-subproblem. For conceptual clarity, we first describe how RADAR instantiates these two axes, and then introduce the second-order momentum filtering used to compute the momentum estimate in practice. Algorithm~\ref{alg:radar} summarizes the actual implementation order, where the current stochastic gradient is first used to update the momentum estimate and the adaptive geometry before the parameter update is performed.

\paragraph{Relativistic adaptive geometry.}
Section~\ref{subsec:adaptive_geometry} shows that the $y$-subproblem determines the update geometry. Given a mini-batch \(\mathcal{B}_k=\{\xi_{k,i}\}_{i=1}^{\mathcal{B}_k}\), let \(g_k = \frac{1}{\vert \mathcal{B}_k\vert}\sum_{\xi\in\mathcal{B}_k}\nabla l(\theta_k,\xi)\) be the stochastic gradient estimator of \(\nabla\mathcal{L}(\theta_k)\). RADAR updates the second-moment estimate by
\begin{equation}
v_{k+1}
=
\beta_2 v_k+(1-\beta_2)g_k^2 ,
\label{eq:radar_v_update}
\end{equation}
where all vector operations are element-wise. We define the relativistic adaptive geometry matrix as
\begin{equation}
R_{k+1}
:=
\mathrm{Diag}\bigl(\sqrt{\delta^2 v_{k+1}+\zeta}\bigr),
\label{eq:radar_r_matrix}
\end{equation}
where \(\delta>0\) is the speed coefficient and \(\zeta\in(0,1]\) is the symplectic factor. The speed coefficient controls the strength of gradient normalization, and a larger \(\delta\) imposes stronger coordinate-wise speed limitation. The symplectic factor controls the adaptivity level and is inherited from the relativistic Hamiltonian interpretation of RAD~\cite{lyu2024conformal}.

Given the filtered momentum estimate \(m_{k+1}\), under this geometry, the AIM $y$-subproblem gives the tentative descent point as follows:
\begin{equation}
y_{k+1}
=
\theta_k
-
\eta R_{k+1}^{-1}m_{k+1}.
\label{eq:radar_y_update}
\end{equation}

\paragraph{Decoupled residual correction.}
The AIM framework shows that the $\theta$-subproblem corrects the tentative descent point using the gradient--momentum mismatch. Under relativistic adaptive geometry \eqref{eq:radar_r_matrix}, this correction takes the preconditioned form \(R_{k+1}^{-1}(m_{k+1}-g_k)\). Instead of tying its scale to the main learning rate and the momentum coefficient as in \eqref{eq:theta_nesterov_approx}, we introduce an independent correction coefficient \(\ell\):
\begin{equation}
\begin{aligned}
\theta_{k+1}
&=
y_{k+1}
+
\ell R_{k+1}^{-1}(m_{k+1}-g_k) \\
&=
\underbrace{\theta_k-\eta R_{k+1}^{-1}m_{k+1}}_{\text{relativistic momentum descent}}
-
\underbrace{\ell R_{k+1}^{-1}(g_k-m_{k+1})}_{\text{decoupled residual correction}} .
\end{aligned}
\label{eq:radar_theta_update}
\end{equation}
When \(\ell=0\), RADAR reduces to a relativistic adaptive momentum update. When \(\ell>0\), the residual term provides an acceleration-like correction to the parameter update.

\paragraph{Gradient-difference momentum correction.}
The residual correction in \eqref{eq:radar_theta_update} depends on the discrepancy between the current stochastic gradient \(g_k\) and the multiplier-like momentum estimate \(m_{k+1}\). RADAR therefore augments the standard exponential moving average with a gradient-difference correction:
\begin{equation}
m_{k+1}
=
\beta_1 m_k
+
(1-\beta_1)g_k
+
\gamma(g_k-g_{k-1}),
\label{eq:radar_m_update}
\end{equation}
where \(\gamma\geq 0\) controls the contribution of recent gradient variation. When \(\gamma=0\), \eqref{eq:radar_m_update} reduces to the standard exponential moving-average momentum update.

Following the frequency-domain perspective that interprets momentum methods as filters acting on stochastic gradients~\citep{li2025performance}, we further characterize the fixed-coefficient recursion in \eqref{eq:radar_m_update}. It can be equivalently written as
\begin{equation}
m_{k+1}
=
\beta_1m_k
+
(1-\beta_1+\gamma)g_k
-
\gamma g_{k-1}.
\end{equation}

Although RADAR and Adan share the same momentum formula, they arise from different perspectives. Adan interprets the gradient-difference term as a form of Nesterov acceleration, whereas AIM interprets momentum as a multiplier variable and derives the corresponding filtering rule from a filtering-and-control perspective. In AIM, Nesterov acceleration instead appears as a one-step fixed-point approximation to the $\theta$-subproblem. Thus, the key distinction lies not in the momentum formula itself, but in its derivation and interpretation within the AIM framework.

Defining \(\widetilde m_k=m_{k+1}\), the corresponding transfer function is
\begin{equation}
H(z)
=
\frac{\widetilde M(z)}{G(z)}
=
\frac{1-\beta_1+\gamma-\gamma z^{-1}}
     {1-\beta_1z^{-1}}.
\end{equation}
The recursion therefore contains one pole and one zero. The exponential moving-average branch provides recursive low-pass behavior, whereas the gradient-difference branch emphasizes recent changes in the stochastic gradient.

Combining relativistic adaptive geometry, decoupled residual correction, and second-order momentum filtering gives Algorithm~\ref{alg:radar}.

\begin{algorithm}[t]
\caption{Relativistic Adaptive Gradient Descent with Accelerated Residual (RADAR)}
\label{alg:radar}
\small
\begin{algorithmic}[1]
\REQUIRE Initial parameter \(\theta_0\), learning rate \(\eta>0\), correction coefficient \(\ell\geq0\), momentum coefficients \(\beta_1,\beta_2\in[0,1)\), filtering coefficient \(\gamma\geq0\), speed coefficient \(\delta>0\), symplectic factor \(\zeta\in(0,1]\), maximum iteration number \(T\)
\STATE Initialize \(m_0=0\), \(v_0=0\), and \(g_{-1}=0\)
\FOR{\(k=0,1,\ldots,T-1\)}
    \STATE Draw a mini-batch \(\mathcal{B}_k\) and compute \(g_k=\frac{1}{\vert \mathcal{B}_k\vert}\sum_{\xi\in\mathcal{B}_k}\nabla l(\theta_k,\xi)\)
    \STATE \(m_{k+1}=\beta_1m_k+(1-\beta_1)g_k+\gamma(g_k-g_{k-1})\)
    \STATE \(v_{k+1}=\beta_2v_k+(1-\beta_2)g_k^2\)
    \STATE \(R_{k+1}=\mathrm{Diag}\bigl(\sqrt{\delta^2v_{k+1}+\zeta}\bigr)\)
    \STATE \(\theta_{k+1}=\theta_k-\eta R_{k+1}^{-1}m_{k+1}
    -\ell R_{k+1}^{-1}(g_k - m_{k+1})\)
\ENDFOR
\RETURN \(\theta_T\)
\end{algorithmic}
\end{algorithm}

\subsection{Convergence Analysis}
\label{subsec:convergence}

We present the convergence analysis of RADAR in Algorithm~\ref{alg:radar}. The proof uses a variance-perturbed Lyapunov drift argument that jointly controls the objective value, the momentum residual, and the successive parameter displacement.

\begin{assumption}[Smoothness and lower boundedness]
\label{ass:smoothness}
For any data sample $\xi$, the sample-wise loss $l(\cdot,\xi)$ is $L$-smooth. Moreover, $\mathcal{L}$ is lower bounded by $\mathcal{L}_*$.
\end{assumption}

\begin{assumption}[Unbiased stochastic gradient and bounded variance]
\label{ass:unbiased_variance}
Let $g_k$ be the mini-batch stochastic gradient, where the samples are drawn independently. We assume that the single-sample stochastic gradient is unbiased and has bounded variance:
\[
\mathbb{E}[\nabla l(\theta,\xi)]
=
\nabla\mathcal{L}(\theta),
\qquad
\mathbb{E}
\left\|
\nabla l(\theta,\xi)-\nabla\mathcal{L}(\theta)
\right\|^2
\leq
\sigma^2,
\quad
\forall \theta .
\]
\end{assumption}

\begin{assumption}[Bounded adaptive geometry]
\label{ass:bounded_geometry}
Let $R_k$ be the relativistic adaptive geometry matrix in RADAR as defined in \eqref{eq:radar_r_matrix}. There exist constants $0<\nu_{\min}\leq\nu_{\max}<\infty$ such that
\begin{equation}
\nu_{\min} I
\preceq
R_k^{-1}
\preceq
\nu_{\max} I,
\quad
\forall k.
\label{eq:bounded_geometry}
\end{equation}
\end{assumption}

We define $\varepsilon_k:=\nabla\mathcal{L}(\theta_k)-g_k$, $r_{k+1}:=\nabla\mathcal{L}(\theta_k)-m_{k+1}$ and consider the Lyapunov function as follows:
\begin{equation}
V_k
:=
\mathcal{L}(\theta_k)
+
c_1\|r_k\|^2
+
c_2\|\theta_k-\theta_{k-1}\|^2 ,
\label{eq:radar_lyapunov_function}
\end{equation}
where $c_1,c_2>0$ are specified in Appendix~\ref{app:radar_convergence}.

\begin{lemma}[Lyapunov drift bound]
\label{lem:radar_lyapunov_descent}
Suppose Assumptions~\ref{ass:smoothness}--\ref{ass:bounded_geometry} hold. If the parameters $\eta,\ell,\beta_1,\gamma$ are chosen in the stable regime specified in Appendix~\ref{app:radar_convergence}, then there exist constants $d_1,d_2,C_v>0$ such that
\begin{equation}
\mathbb{E}V_{k+1}-\mathbb{E}V_k
\leq
-d_1\mathbb{E}\|r_{k+1}\|^2
-
d_2\mathbb{E}\|\nabla\mathcal{L}(\theta_k)\|^2
+
C_v\sigma^2
\left(
\frac{1}{\vert \mathcal{B}_k\vert}
+
\frac{1}{\vert \mathcal{B}_{k-1}\vert}
\right),
\label{eq:radar_lyapunov_descent}
\end{equation}
where $\mathcal{B}_{-1}:=\mathcal{B}_0$,\(d_1,d_2>0\) are the descent coefficients for the momentum residual and the stationarity measure,
while \(C_v>0\) quantifies the amplification of stochastic-gradient variance.
\end{lemma}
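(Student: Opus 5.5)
We will bound the three components of $V_{k+1}-V_k$ separately, each in terms of $\|r_{k+1}\|^2$, $\|\nabla\mathcal{L}(\theta_k)\|^2$, $\|\theta_{k+1}-\theta_k\|^2$, $\|\theta_k-\theta_{k-1}\|^2$ and the noise terms $\varepsilon_k,\varepsilon_{k-1}$. Then we choose $c_1,c_2$ and the stable regime so that the cross terms telescope or are absorbed.

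\emph{Step 1: rewrite the update.} Since $g_k=\nabla\mathcal{L}(\theta_k)-\varepsilon_k$ and $m_{k+1}=\nabla\mathcal{L}(\theta_k)-r_{k+1}$, the RADAR step becomes
\[
\theta_{k+1}-\theta_k=-R_{k+1}^{-1}\bigl(\eta\nabla\mathcal{L}(\theta_k)-(\eta-\ell)r_{k+1}-\ell\varepsilon_k\bigr),
\]
a preconditioned gradient step perturbed by the residual and the noise. Using Assumption~\ref{ass:bounded_geometry}, we get $\|\theta_{k+1}-\theta_k\|^2\le 3\nu_{\max}^2(\eta^2\|\nabla\mathcal{L}(\theta_k)\|^2+(\eta-\ell)^2\|r_{k+1}\|^2+\ell^2\|\varepsilon_k\|^2)$.

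\emph{Step 2: objective descent.} By $L$-smoothness (Assumption~\ref{ass:smoothness}),
\[
\mathcal{L}(\theta_{k+1})\le\mathcal{L}(\theta_k)+\langle\nabla\mathcal{L}(\theta_k),\theta_{k+1}-\theta_k\rangle+\tfrac L2\|\theta_{k+1}-\theta_k\|^2 .
\]
The main inner product contributes $-\eta\nu_{\min}\|\nabla\mathcal{L}(\theta_k)\|^2$. The $r_{k+1}$ cross term we handle with Young's inequality, producing $\frac{\eta\nu_{\min}}{4}\|\nabla\mathcal{L}\|^2+C\|r_{k+1}\|^2$. The noise term is the delicate one, because $R_{k+1}$ depends on $g_k$, so $\mathbb{E}\langle\nabla\mathcal{L},R_{k+1}^{-1}\varepsilon_k\rangle\neq0$ in general. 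We will not try to exploit unbiasedness here. Instead we bound this term by Young as well, yielding $C\ell^2\nu_{\max}^2\|\varepsilon_k\|^2/(\eta\nu_{\min})$. After taking expectations and using independence within the mini-batch, $\mathbb{E}\|\varepsilon_k\|^2\le\sigma^2/|\mathcal{B}_k|$, which produces the variance term.

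\emph{Step 3: residual recursion.} From the momentum update,
\[
r_{k+1}=\beta_1 r_k+\beta_1(\nabla\mathcal{L}(\theta_k)-\nabla\mathcal{L}(\theta_{k-1}))+(1-\beta_1+\gamma)\varepsilon_k-\gamma\varepsilon_{k-1}-\gamma(\nabla\mathcal{L}(\theta_k)-\nabla\mathcal{L}(\theta_{k-1})).
\]
We will verify this algebra carefully. Since $\beta_1<1$, applying $(a+b)^2\le(1+\alpha)a^2+(1+1/\alpha)b^2$ with $\alpha=(1-\beta_1)/(2\beta_1)$ or similar, together with smoothness, gives
\[
\|r_{k+1}\|^2\le\tfrac{1+\beta_1}{2}\|r_k\|^2+C(\beta_1,\gamma)L^2\|\theta_k-\theta_{k-1}\|^2+C'(\|\varepsilon_k\|^2+\|\varepsilon_{k-1}\|^2).
\]
This is where the $1/|\mathcal{B}_{k-1}|$ term comes from.

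\emph{Step 4: combine and choose constants.} The combination is
\[
c_1(\|r_{k+1}\|^2-\|r_k\|^2)\le -c_1\tfrac{1-\beta_1}{2}\|r_k\|^2+\cdots .
\]
This gives descent in $\|r_k\|$ rather than $\|r_{k+1}\|$. We fix this by rewriting $c_1(\|r_{k+1}\|^2-\|r_k\|^2)\le-\frac{1-\beta_1}{1+\beta_1}c_1\|r_{k+1}\|^2+\ldots$, dividing the recursion by $(1+\beta_1)/2$. The term $c_2(\|\theta_{k+1}-\theta_k\|^2-\|\theta_k-\theta_{k-1}\|^2)$ then cancels the $c_1CL^2\|\theta_k-\theta_{k-1}\|^2$ contribution once $c_2\ge c_1CL^2$. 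The remaining $(c_2+L/2)\|\theta_{k+1}-\theta_k\|^2$ is bounded via Step 1.

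The stable regime is then the set of conditions making the resulting coefficients negative:
\[
d_2=\tfrac{\eta\nu_{\min}}{2}-3(c_2+\tfrac L2)\nu_{\max}^2\eta^2>0,
\qquad
d_1=c_1\tfrac{1-\beta_1}{1+\beta_1}-C-3(c_2+\tfrac L2)\nu_{\max}^2(\eta-\ell)^2>0 .
\]
Concretely, we pick $c_1$ large relative to the Step 2 constant, set $c_2=c_1CL^2$, and then take $\eta$ and $|\eta-\ell|$ small enough. The main obstacle is this circular dependence: $c_2$ grows with $c_1$, and $c_1$ must dominate the residual cross term from Step 2, which scales like $\eta\nu_{\max}^2/\nu_{\min}$. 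The scheme closes because that constant is $O(\eta)$ while $c_2\eta^2$ is $O(\eta^3)$, so for small $\eta$ all the quadratic terms are absorbed. All noise contributions collect into $C_v\sigma^2(1/|\mathcal{B}_k|+1/|\mathcal{B}_{k-1}|)$.
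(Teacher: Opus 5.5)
Your proposal is correct and follows essentially the same route as the paper's proof. Both use the rewriting $\theta_{k+1}-\theta_k=-R_{k+1}^{-1}\bigl(\eta\nabla\mathcal{L}(\theta_k)-(\eta-\ell)r_{k+1}-\ell\varepsilon_k\bigr)$, apply Young's inequality to both cross terms of the smoothness bound (so the dependence of $R_{k+1}$ on $g_k$ never matters), divide a contracting residual recursion by its contraction factor to get descent in $\|r_{k+1}\|^2$, cancel $\|\theta_k-\theta_{k-1}\|^2$ through $c_2\ge c_1C_\theta$, and arrive at the identical $d_2$.

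The differences are cosmetic or in your favor. Your $\eta\nu_{\min}/4$-weighted Euclidean Young avoids the paper's side condition $\eta\ge\ell$. Your choice $c_1\propto\eta$ (so that $c_2\eta^2=O(\eta^3)$) also shows that the stable regime is nonempty, which the paper only asserts.
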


Lemma~\ref{lem:radar_lyapunov_descent} is a variance-perturbed Lyapunov drift bound. It reduces to a standard Lyapunov descent condition in the full-gradient case and yields a variance-controlled stationary neighborhood in the stochastic case. We define $R(T) = \frac{1}{T}
\sum_{k=0}^{T-1}
\mathbb{E}\|\nabla\mathcal{L}(\theta_k)\|^2$, and give its bound as follows.

\begin{theorem}[Sublinear convergence of RADAR]
\label{thm:radar_convergence}
Under the conditions of Lemma~\ref{lem:radar_lyapunov_descent}, the sequence generated by Algorithm~\ref{alg:radar} satisfies
\begin{equation}
R(T)
\leq
\frac{\mathbb{E}V_0-\mathcal{L}_*}{d_2T}
+
\frac{C_v\sigma^2}{d_2T}
\sum_{k=0}^{T-1}
\left(
\frac{1}{\vert \mathcal{B}_k\vert}
+
\frac{1}{\vert \mathcal{B}_{k-1}\vert}
\right).
\label{eq:radar_sublinear_rate}
\end{equation}
\end{theorem}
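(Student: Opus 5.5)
The plan is to telescope the drift inequality of Lemma~\ref{lem:radar_lyapunov_descent} over $k=0,\dots,T-1$. Summing \eqref{eq:radar_lyapunov_descent} from $k=0$ to $T-1$ gives
\begin{equation*}
\mathbb{E}V_T-\mathbb{E}V_0
\leq
-d_1\sum_{k=0}^{T-1}\mathbb{E}\|r_{k+1}\|^2
-d_2\sum_{k=0}^{T-1}\mathbb{E}\|\nabla\mathcal{L}(\theta_k)\|^2
+C_v\sigma^2\sum_{k=0}^{T-1}\left(\frac{1}{\vert\mathcal{B}_k\vert}+\frac{1}{\vert\mathcal{B}_{k-1}\vert}\right).
\end{equation*}
Since $d_1>0$, the momentum-residual sum is nonpositive and can be dropped.

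Next, I would lower bound $\mathbb{E}V_T$. By the definition \eqref{eq:radar_lyapunov_function} and $c_1,c_2>0$, we have $V_T\geq\mathcal{L}(\theta_T)\geq\mathcal{L}_*$ by Assumption~\ref{ass:smoothness}, so $\mathbb{E}V_T\geq\mathcal{L}_*$. Rearranging then gives
\begin{equation*}
d_2\sum_{k=0}^{T-1}\mathbb{E}\|\nabla\mathcal{L}(\theta_k)\|^2
\leq
\mathbb{E}V_0-\mathcal{L}_*+C_v\sigma^2\sum_{k=0}^{T-1}\left(\frac{1}{\vert\mathcal{B}_k\vert}+\frac{1}{\vert\mathcal{B}_{k-1}\vert}\right).
\end{equation*}
Dividing by $d_2T>0$ yields exactly \eqref{eq:radar_sublinear_rate} in terms of $R(T)$.

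The only points needing care are bookkeeping. First, the $k=0$ term involves $\mathcal{B}_{-1}:=\mathcal{B}_0$, following the convention of the lemma. Second, $V_0$ must be well defined: it involves $r_0$ and $\theta_{-1}$, which I would fix by convention, for instance $\theta_{-1}=\theta_0$ and $r_0=\nabla\mathcal{L}(\theta_0)-m_0$. Third, the drift bound must hold with the same constants $d_1,d_2,C_v$ for every $k$, which Lemma~\ref{lem:radar_lyapunov_descent} provides under the fixed stable parameter regime. With these in place there is no real obstacle; all the substantive difficulty lives in the lemma, which we take as given.
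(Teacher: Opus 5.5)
Your proposal is correct and matches the paper's proof: the paper likewise rearranges the drift bound, drops the nonnegative $d_1\mathbb{E}\|r_{k+1}\|^2$ term, telescopes over $k=0,\dots,T-1$, uses $V_T\geq\mathcal{L}(\theta_T)\geq\mathcal{L}_*$, and divides by $d_2T$. The paper leaves your bookkeeping remarks implicit (the convention $\mathcal{B}_{-1}:=\mathcal{B}_0$, and fixing $\theta_{-1}$ and $r_0$ so that $V_0$ is well defined), and they are worth keeping.
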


Therefore, with a fixed mini-batch size, RADAR converges to a variance-controlled stationary neighborhood. If full gradients are used or if the batch-size schedule makes the accumulated variance term uniformly bounded, RADAR achieves an \(O(1/T)\) average stationarity bound. The proofs of Lemma~\ref{lem:radar_lyapunov_descent} and Theorem~\ref{thm:radar_convergence} are provided in Appendix~\ref{app:radar_convergence}.

\section{Numerical Experiments}
\label{sec:experiments}

We evaluate RADAR on supervised vision learning, language modeling, and reinforcement learning to assess whether the AIM-derived optimizer RADAR provides consistent gains across training regimes with different model architectures, data modalities, and optimization dynamics. We compare RADAR with RAD, Adam, AdamW, NAdam,Adan\cite{xie2024adan},AdaBelief\cite{zhuang2020adabelief} and Lion\cite{chen2023symbolic}. For fair comparison, We conducted learning-rate searches with the same task-specific tuning budget for all optimizers on each of the four supervised learning tasks. After selecting the optimal learning rate for each optimizer, we retrained all models from scratch using five independent random seeds, except for the reinforcement learning tasks, for which ten seeds were used. The detailed experimental settings are reported in Appendix~\ref{app:extra_experiments}. Except for the Qwen experiment using 4$\times$A100,all reported experiments are conducted on a single NVIDIA RTX 5090 GPU.

\subsection{Experimental Setup}
\label{subsec:experimental_setup}

\paragraph{Supervised vision learning.}
We evaluate the image classification performance on CIFAR-10 and CIFAR-100~\cite{krizhevsky2009learning}. We train a Vision Transformer (ViT, 6.3M parameters)~\cite{dosovitskiy2021image} on CIFAR-10 and a ResNet-50~\cite{he2016deep} on CIFAR-100 with standard data augmentation, following the training configuration of ROOT~\cite{he2025root}. Each optimizer is trained for 100 epochs. We report the mean and standard deviation of test loss.

\paragraph{Language modeling.}
We evaluate GPT-2 (124M)~\cite{radford2019language} in both pre-training and fine-tuning settings. For pre-training, we train GPT-2 from scratch on WikiText-103~\cite{merity2017pointer} for 10,000 update steps with a block size of 512 and an effective batch size of 64, corresponding to approximately 328M training tokens. For fine-tuning, we fine-tune GPT-2 on WikiText-2~\cite{merity2017pointer} for 3 epochs. To assess the scalability of the compared optimizers, we additionally train Qwen2.5-0.5B~\cite{yang2024qwen25} from scratch on a 10B-token subset of FineWeb-Edu~\cite{penedo2024fineweb} with a total training budget of 1B tokens. We report test perplexity (PPL), where lower is better, together with the mean and standard deviation over multiple random seeds. Detailed language-modeling settings are provided in Appendix~\ref{sec:language}.

\paragraph{Reinforcement learning.}
We evaluate long-horizon continuous control on MuJoCo benchmarks~\cite{todorov2012mujoco,brockman2016openai}. In the main text, we report Soft Actor-Critic (SAC)~\cite{haarnoja2018soft} results on HalfCheetah-v4 and Walker2d-v4. All optimizers are plugged into the same GOPS~\cite{wang2023gops} implementation with identical training and evaluation protocols. We report the total average return (TAR) and standard deviation over 10 random seeds.

 % while Proximal Policy Optimization (PPO)~\cite{schulman2017proximal} results on Ant-v4 and Humanoid-v4 are provided in the Appendix~\ref{sec:reinforcement}
% \begin{figure*}[t]
%     \centering
%     \includegraphics[width=\linewidth]{figures.pdf}
%     \caption{Test-loss curves of different optimizers on supervised vision learning tasks.}
%     \label{fig:optimizer_compare}
% \end{figure*}

\subsection{Main Results}
\label{subsec:main_results}
Table~\ref{tab:main_results} summarizes the main results across supervised vision learning, language modeling, and reinforcement learning. RADAR achieves the best mean performance in all seven evaluated settings, spanning different model architectures, data modalities, and optimization objectives, demonstrating consistent effectiveness across diverse learning problems.

For supervised vision learning, RADAR achieves the lowest test loss on both CIFAR-10 with ViT and CIFAR-100 with ResNet-50. On CIFAR-10, RADAR obtains $0.8011\pm0.0112$, outperforming the second-best Adan at $0.8029\pm0.0086$. On CIFAR-100, RADAR achieves $1.6190\pm0.0088$, compared with $1.6244\pm0.0100$ for the second-best Adam. Although the margins are modest, RADAR consistently ranks first across both convolutional and Transformer-based architectures.

The advantage of RADAR is more pronounced in language-model pre-training. On WikiText-103 with GPT-2, RADAR achieves the best test perplexity of $22.4963\pm0.0933$, outperforming the second-best Adan at $23.0514\pm0.1909$. As shown in Figure~\ref{fig:train_loss}, RADAR also maintains the lowest training loss in the later stage of pre-training, indicating a sustained optimization advantage.

This trend persists at a larger scale on FineWeb-Edu-10B with Qwen2.5-0.5B, where RADAR achieves the best test perplexity of $30.3029\pm0.2812$, followed by RAD at $30.8392\pm0.7031$. Figure~\ref{fig:train_loss} likewise shows that RADAR reaches the lowest late-stage training loss. The consistent gains across GPT-2-124M and Qwen2.5-0.5B suggest that RADAR scales favorably to larger models and training budgets.

\begin{figure*}[t] 
\centering 
\includegraphics[width=\linewidth]{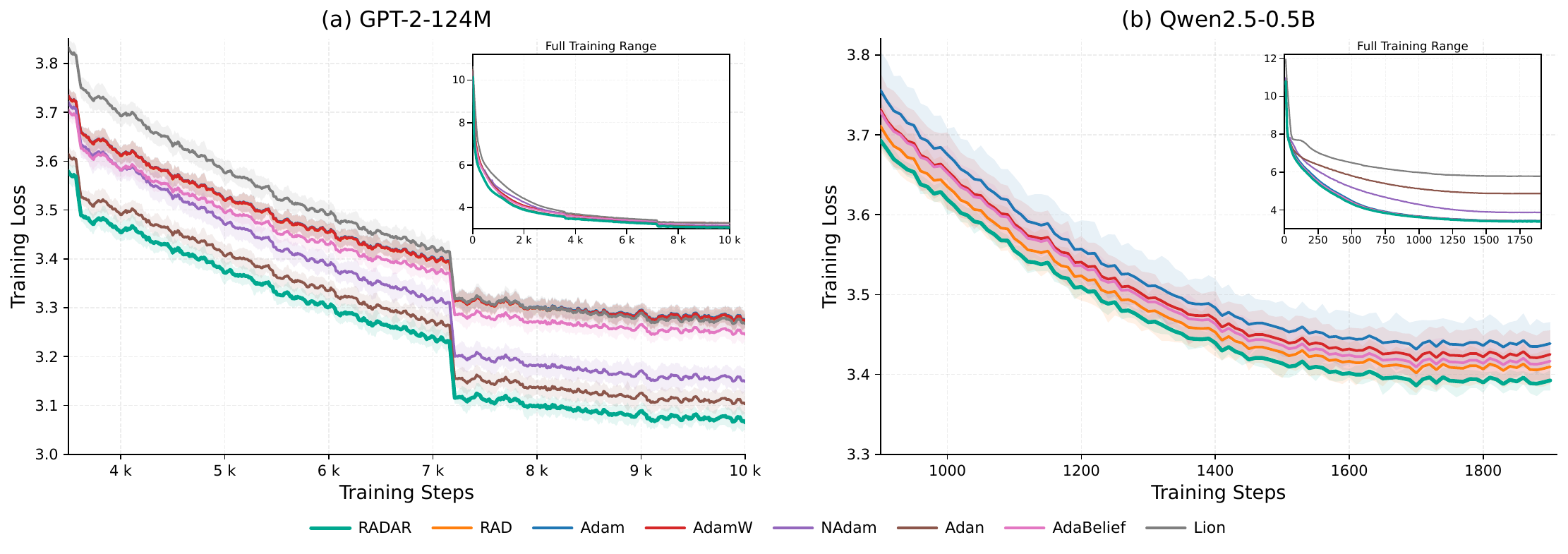} 
\caption{Training-loss curves of different optimizers on language-model pre-training tasks.}
\label{fig:train_loss} 
\end{figure*} 

For GPT-2 fine-tuning on WikiText-2, RADAR again achieves the best mean perplexity of $21.5358\pm0.0082$, outperforming the second-best NAdam at $21.5513\pm0.0054$, despite smaller differences among the strongest methods. Together with the pre-training results, this demonstrates the effectiveness of RADAR in both training-from-scratch and pretrained-model adaptation regimes. Additional language-modeling results are provided in Appendix~\ref{sec:language}.

RADAR also performs consistently well in reinforcement learning. On HalfCheetah-v4 with SAC, RADAR achieves the highest total average return of $9569\pm926$, exceeding the second-best AdaBelief at $9194\pm828$. On Walker2d-v4, RADAR obtains $3580\pm282$, compared with $3362\pm776$ for the second-best NAdam, while exhibiting substantially lower variability than other competitive methods.

% Additional reinforcement-learning results are provided in Appendix~\ref{sec:reinforcement}.

Overall, RADAR ranks first across all evaluated vision, language-modeling, and reinforcement-learning settings, with particularly notable gains in pre-training and reinforcement learning and consistent improvements in vision and language-model fine-tuning.

% ==================== Table ====================
\begin{table*}[t]
\centering

\caption{Main results across supervised vision learning, language modeling,
and reinforcement learning. Results are reported as mean $\pm$ standard
deviation. Best results are in bold and second-best results are underlined.}
\label{tab:main_results}

\small

% Horizontal spacing
\setlength{\tabcolsep}{2pt}

% Do not use large arraystretch here; otherwise the mean/std gap becomes large
\renewcommand{\arraystretch}{1.02}

\begin{tabular*}{\textwidth}{
@{\extracolsep{\fill}}
>{\raggedright\arraybackslash}m{0.170\textwidth}
C{0.060\textwidth}
*{8}{C{0.079\textwidth}}
@{}
}

\toprule

\textbf{Task}
&
\textbf{Metric}
&
\textbf{RADAR}
&
\textbf{RAD}
&
\textbf{Adam}
&
\textbf{AdamW}
&
\textbf{NAdam}
&
\textbf{Adan}
&
\textbf{AdaBelief}
&
\textbf{Lion}
\\

\midrule

% ============================================================
% CIFAR-10
% ============================================================

\multirow[c]{2}{*}{
    \makecell[l]{CIFAR-10,\\ ViT(6.3M)}
}
&
\multirow[c]{2}{*}{%
    \makecell[c]{Test\\Loss $\downarrow$}
}
&
\best{0.8011}
&
0.8307
&
0.8240
&
0.8339
&
0.8200
&
\second{0.8029}
&
0.8322
&
0.8063
\\

&
&
$\pm$\best{0.0112}
&
$\pm 0.0051$
&
$\pm 0.0078$
&
$\pm 0.0039$
&
$\pm 0.0107$
&
\second{$\pm 0.0086$}
&
$\pm 0.0078$
&
$\pm 0.0089$
\\

\addlinespace[4pt]

% ============================================================
% CIFAR-100
% ============================================================

\multirow[c]{2}{*}{
    \makecell[l]{CIFAR-100,\\ResNet-50}
}
&
\multirow[c]{2}{*}{%
    \makecell[c]{Test\\Loss $\downarrow$}
}
&
\best{1.6190}
&
1.6787
&
\second{1.6244}
&
1.6771
&
1.6326
&
1.6338
&
1.6831
&
1.7078
\\

&
&
$\pm$\best{0.0088}
&
$\pm 0.0108$
&
\second{$\pm 0.0100$}
&
$\pm 0.0191$
&
$\pm 0.0122$
&
$\pm 0.0068$
&
$\pm 0.0086$
&
$\pm 0.0152$
\\

\midrule

% ============================================================
% WikiText-103
% ============================================================

\multirow[c]{2}{*}{
    \makecell[l]{WikiText-103,\\GPT-2 Small}
}
&
\multirow[c]{2}{*}{%
    \makecell[c]{Test\\PPL $\downarrow$}
}
&
\best{22.4963}
&
26.0321
&
26.0733
&
26.0281
&
23.9754
&
\second{23.0514}
&
25.5079
&
26.1346
\\

&
&
$\pm$\best{0.0933}
&
$\pm 0.0331$
&
$\pm 0.0321$
&
$\pm 0.0433$
&
$\pm 0.2368$
&
\second{$\pm 0.1909$}
&
$\pm 0.0738$
&
$\pm 0.2068$
\\

\addlinespace[4pt]

% ============================================================
% FineWeb-Edu / Qwen
% ============================================================

\multirow[c]{2}{*}{
    \makecell[l]{FineWeb-Edu-10B,\\Qwen2.5-0.5B}
}
&
\multirow[c]{2}{*}{PPL $\downarrow$}
&
\best{30.3029}
&
\second{30.8392}
&
31.7604
&
31.3155
&
49.1373
&
133.2456
&
31.0569
&
331.9066
\\

&
&
$\pm$\best{0.2812}
&
\second{$\pm 0.7031$}
&
$\pm 0.9650$
&
$\pm 0.8857$
&
$\pm 2.1917$
&
$\pm 15.2367$
&
$\pm 0.6742$
&
$\pm 50.1679$
\\

\addlinespace[4pt]

% ============================================================
% WikiText-2
% ============================================================

\multirow[c]{2}{*}{
    \makecell[l]{WikiText-2 SFT, \\GPT-2 Small}
}
&
\multirow[c]{2}{*}{%
    \makecell[c]{Test\\PPL $\downarrow$}
}
&
\best{21.5358}
&
21.5744
&
21.5768
&
21.5765
&
\second{21.5513}
&
21.6465
&
21.5578
&
21.7824
\\

&
&
$\pm$\best{0.0082}
&
$\pm 0.0133$
&
$\pm 0.0132$
&
$\pm 0.0129$
&
\second{$\pm 0.0054$}
&
$\pm 0.0092$
&
$\pm 0.0169$
&
$\pm 0.0412$
\\

\midrule

% ============================================================
% HalfCheetah
% ============================================================

\multirow[c]{2}{*}{
    \makecell[l]{HalfCheetah-v4,\\SAC}
}
&
\multirow[c]{2}{*}{TAR $\uparrow$}
&
\best{9569}
&
8996
&
8557
&
8828
&
8582
&
8127
&
\second{9194}
&
8251
\\

&
&
$\pm $\best{926}
&
$\pm 1025$
&
$\pm 939$
&
$\pm 790$
&
$\pm 1493$
&
$\pm 559$
&
\second{$\pm 828$}
&
$\pm 828$
\\

\addlinespace[4pt]

% ============================================================
% Walker2d
% ============================================================

\multirow[c]{2}{*}{
    \makecell[l]{Walker2d-v4,\\SAC}
}
&
\multirow[c]{2}{*}{TAR $\uparrow$}
&
\best{3580}
&
2687
&
2798
&
3196
&
\second{3362}
&
2561
&
2729
&
2877
\\

&
&
$\pm $\best{282}
&
$\pm 932$
&
$\pm 1117$
&
$\pm 1124$
&
\second{$\pm 776$}
&
$\pm 1318$
&
$\pm 980$
&
$\pm 1025$
\\

\bottomrule

\end{tabular*}
\end{table*}

\section{Conclusion}

In this paper, we propose the AIM framework, which interprets momentum as a residual-driven multiplier-like correction and separates update geometry from acceleration form. Based on this view, we develop RADAR, a new optimizer that combines relativistic adaptive geometry, decoupled residual correction, and second-order momentum filtering. We establish stochastic convergence via a variance-perturbed Lyapunov analysis, and experiments on vision, language modeling, and reinforcement learning tasks demonstrate the effectiveness of RADAR. This study is currently limited to representative momentum-based optimizers and standard stochastic convergence settings. Future work will continue to explore optimizer design within the AIM framework. On the theoretical side, we aim to establish convergence guarantees under weaker conditions by relaxing the current analytical assumptions. On the empirical side, we plan to further evaluate RADAR in larger-scale models, longer training regimes, and more diverse learning scenarios to better understand its scalability and general applicability.

\clearpage

%%%%%%%%%%%%%%%%%%%%%%%%%%%%%%%%%%%%%%%%%%%%%%%%%%%%%%%%%%%%

\bibliographystyle{unsrt}   % 设置参考文献样式
\bibliography{ref}    % 引用 .bib 文件（省略后缀）

\newpage
\appendix
\section{Technical Proofs and Derivations}
\label{app:technical_proofs}

\subsection{Derivation of Multiplier Update Rules}
\label{app:multiplier_update}

We first explain the multiplier update rule in the classical ADMM update \eqref{eq:admm_multiplier_update}. The optimality condition of the $z$-subproblem \eqref{eq:admm_z_update} gives
\begin{equation}
    0
    \in
    \partial \varphi(z_{k+1})
    +
    B^\top \lambda_k
    +
    \rho B^\top(Ax_{k+1}+Bz_{k+1}-c).
\end{equation}
When $\tau=1$, the multiplier update \eqref{eq:admm_multiplier_update} becomes
\begin{equation}
    \lambda_{k+1}
    =
    \lambda_k+\rho(Ax_{k+1}+Bz_{k+1}-c).
    \label{eq:new_mul}
\end{equation}
Combining the above two relations yields
\begin{equation}
    0
    \in
    \partial \varphi(z_{k+1})
    +
    B^\top\lambda_{k+1}.
\end{equation}
Therefore, the updated pair $(z_{k+1},\lambda_{k+1})$ satisfies the stationarity condition of Problem~\eqref{eq:admm_problem} with respect to $z$. This shows that the ADMM multiplier update can be viewed as a residual-driven dual correction toward the KKT condition.

We next derive the multiplier update rule in AIM. The optimality condition of the $\theta$-subproblem gives
\begin{equation}
    0
    \in
    \nabla \mathcal{L}(\theta_{k+1})
    -
    m_k
    -
    \frac{\rho_2}{2}\partial\psi(y_{k+1}-\theta_{k+1}).
\end{equation}
Equivalently, there exists
$d_{k+1}\in\partial\psi(y_{k+1}-\theta_{k+1})$
such that
\begin{equation}
    \nabla \mathcal{L}(\theta_{k+1})
    -
    m_k
    -
    \frac{\rho_2}{2}d_{k+1}
    =
    0.
    \label{eq:aim_theta_stationarity_app}
\end{equation}
For the splitting problem \eqref{eq:splitting_problem}, the stationarity condition with respect to $\theta$ is
\begin{equation}
    \nabla \mathcal{L}(\theta)-m=0 .
\end{equation}
Thus, a full residual-driven multiplier correction would update
\begin{equation}
    m_{k+1}
    =
    m_k+\frac{\rho_2}{2}d_{k+1},
\end{equation}
which, together with \eqref{eq:aim_theta_stationarity_app}, gives $m_{k+1}=\nabla\mathcal{L}(\theta_{k+1})$ and hence satisfies the above stationarity condition at $\theta_{k+1}$.

In AIM, we use a relaxed multiplier update with stepsize $1-\beta_1$:
\begin{equation}
    m_{k+1}
    =
    m_k
    +
    (1-\beta_1)\frac{\rho_2}{2}d_{k+1}.
    \label{eq:aim_multiplier_update_app}
\end{equation}
Substituting \eqref{eq:aim_theta_stationarity_app} into
\eqref{eq:aim_multiplier_update_app} yields
\begin{equation}
    m_{k+1}
    =
    m_k
    +
    (1-\beta_1)
    \bigl(\nabla\mathcal{L}(\theta_{k+1})-m_k\bigr)
    =
    \beta_1 m_k
    +
    (1-\beta_1)\nabla\mathcal{L}(\theta_{k+1}).
\end{equation}
Therefore, the AIM multiplier update recovers the standard exponential moving average form of momentum from a relaxed residual-driven multiplier correction.

\subsection{Proof of Theorem~\ref{thm:nesterov}}
\label{app:nesterov_details}

\begin{proof}
We prove the result by eliminating the auxiliary descent variable \(y\). Under the Euclidean residual penalty \(\psi(r)=\|r\|^2\), the \(y\)-subproblem in
Algorithm~\ref{alg:admm_inspired_momentum} becomes
\[
y_{k+1}
=
\arg\min_y
\left\{
\langle m_k,y-\theta_k\rangle
+
\frac{\rho_1}{2}\|y-\theta_k\|^2
\right\}.
\]
Its first-order optimality condition is
\[
m_k+\rho_1(y_{k+1}-\theta_k)=0,
\]
and hence
\begin{equation}
y_{k+1}
=
\theta_k-\frac{1}{\rho_1}m_k .
\label{eq:app_y_update}
\end{equation}
Taking \(\rho_1=1/\eta\), we obtain
\begin{equation}
y_{k+1}
=
\theta_k-\eta m_k .
\label{eq:app_y_update_eta}
\end{equation}
Thus, \(y_{k+1}\) is the tentative momentum descent point.

We first consider the Nesterov-type approximation of the \(\theta\)-subproblem used in \eqref{eq:theta_nesterov_approx}:
\begin{equation}
\theta_{k+1}
=
y_{k+1}
-
\eta(1-\beta_1)\bigl(\nabla \mathcal{L}(\theta_k)-m_k\bigr).
\label{eq:app_theta_nesterov_approx}
\end{equation}
Substituting \eqref{eq:app_y_update_eta} into \eqref{eq:app_theta_nesterov_approx} gives
\begin{equation}
\theta_{k+1}
=
\theta_k
-
\eta m_k
-
\eta(1-\beta_1)\bigl(\nabla \mathcal{L}(\theta_k)-m_k\bigr).
\label{eq:app_nesterov_decomposition}
\end{equation}
Equivalently, by defining the corrected momentum direction as follow:
\[
\widetilde m_k
:=
m_k+(1-\beta_1)\bigl(\nabla \mathcal{L}(\theta_k)-m_k\bigr),
\]
we can write
\begin{equation}
\theta_{k+1}
=
\theta_k-\eta \widetilde m_k .
\label{eq:app_nesterov_direction}
\end{equation}
Therefore, compared with the pure momentum descent direction \(m_k\), the update direction is corrected by the gradient--momentum mismatch \(\nabla \mathcal{L}(\theta_k)-m_k\). This is the Nesterov-type correction induced by the one-step approximation of the \(\theta\)-subproblem in \eqref{eq:theta_nesterov_approx}. Hence, the resulting update recovers a Nesterov-type momentum method up to the change of variables
\[
\widetilde m_k
=
m_k+(1-\beta_1)\bigl(\nabla \mathcal{L}(\theta_k)-m_k\bigr).
\]

We next consider the simpler approximation as follows:
\[
\theta_{k+1}=y_{k+1}.
\]
Combining it with \eqref{eq:app_y_update_eta} gives
\begin{equation}
\theta_{k+1}
=
\theta_k-\eta m_k .
\label{eq:app_heavy_ball_update}
\end{equation}
Together with the momentum recursion derived from the multiplier update,
\[
m_{k+1}
=
\beta_1m_k+(1-\beta_1)\nabla \mathcal{L}(\theta_{k+1}),
\]
or its usual explicit implementation using the available gradient \(\nabla \mathcal{L}(\theta_k)\), this is the
heavy-ball momentum update up to the standard indexing convention. Hence, the direct approximation
\(\theta_{k+1}=y_{k+1}\) removes the residual correction term and reduces the update to heavy-ball
momentum.
\end{proof}

\subsection{Details for Adaptive and Matrix-Norm Extensions}
\label{app:adaptive_matrix_extensions}

We first justify the adaptive update in \eqref{eq:adam_y_update}. With the weighted residual penalty as follows:
\[
\psi(r)=\|r\|_{Q_k}^2,
\qquad
Q_k=\mathrm{Diag}(\sqrt{v_k}+\epsilon),
\]
the $y$-subproblem becomes
\[
y_{k+1}
=
\arg\min_y
\left\{
\langle m_k,y-\theta_k\rangle
+
\frac{\rho_1}{2}\|y-\theta_k\|_{Q_k}^2
\right\}.
\]
Its first-order optimality condition is
\[
m_k+\rho_1Q_k(y_{k+1}-\theta_k)=0,
\]
which gives
\[
y_{k+1}
=
\theta_k-\frac{1}{\rho_1}Q_k^{-1}m_k.
\]
Taking $\rho_1=1/\eta$ yields
\[
y_{k+1}
=
\theta_k-\eta Q_k^{-1}m_k.
\]
This proves the adaptive preconditioned update used in \eqref{eq:adam_y_update}. The direct approximation $\theta_{k+1}=y_{k+1}$ gives the Adam-type update, while the Nesterov-type approximation in \eqref{eq:nadam_theta_update} gives the NAdam-type update.

We next discuss the matrix-norm case. Let $\Theta,Y,M$ denote matrix-valued counterparts of $\theta,y,m$. Under a spectral-norm geometry, the $y$-step can be interpreted as a normalized steepest-descent step for the local linear model, i.e.,
\[
\min_{\Delta}\ \langle M_k,\Delta\rangle
\qquad
\mathrm{s.t.}\quad \|\Delta\|_2\leq \eta.
\]
Using the duality between the spectral norm and the nuclear norm, an optimal normalized descent direction is given by the negative polar factor of $M_k$. If
\[
M_k=U\Sigma V^\top
\]
is the singular value decomposition of $M_k$, define
\[
\mathrm{Orthogonal}(M_k):=UV^\top .
\]
Then the matrix-valued descent step is
\[
Y_{k+1}
=
\Theta_k-\eta\,\mathrm{Orthogonal}(M_k).
\]
With the direct approximation $\Theta_{k+1}=Y_{k+1}$, we obtain
\[
\Theta_{k+1}
=
\Theta_k-\eta\,\mathrm{Orthogonal}(M_k),
\]
which is the Muon-type matrix update.

\subsection{Proofs for the Convergence Analysis of RADAR}
\label{app:radar_convergence}

\subsubsection{Technical estimates}

Recall that
\[
\varepsilon_k
=
\nabla\mathcal{L}(\theta_k)-g_k,
\qquad
r_{k+1}
=
\nabla\mathcal{L}(\theta_k)-m_{k+1}.
\]
We first record a standard mini-batch variance bound. Under Assumption~\ref{ass:unbiased_variance}, conditioned on $\theta_k$, we have
\[
\mathbb{E}[g_k\mid \theta_k]
=
\mathbb{E}\left[
\frac{1}{\vert \mathcal{B}_k\vert}\sum_{b=1}^{B_k}\nabla l(\theta_k,\xi_{k,b})
\mid \theta_k
\right]
=
\nabla\mathcal{L}(\theta_k),
\]
and
\[
g_k-\nabla\mathcal{L}(\theta_k)
=
\frac{1}{\vert \mathcal{B}_k\vert}
\sum_{b=1}^{B_k}
\left(
\nabla l(\theta_k,\xi_{k,b})
-
\nabla\mathcal{L}(\theta_k)
\right).
\]
Since the mini-batch samples are independent, the cross terms vanish. Therefore,
\begin{equation}
\mathbb{E}\|\varepsilon_k\|^2
=
\mathbb{E}\|g_k-\nabla\mathcal{L}(\theta_k)\|^2 \\
=
\frac{1}{\vert \mathcal{B}_k\vert^2}
\sum_{b=1}^{B_k}
\mathbb{E}
\left\|
\nabla l(\theta_k,\xi_{k,b})
-
\nabla\mathcal{L}(\theta_k)
\right\|^2
\leq
\frac{\sigma^2}{\vert \mathcal{B}_k\vert}.
\label{eq:app_minibatch_variance}
\end{equation}

We next derive two estimates used in the Lyapunov analysis. From the second-order momentum filtering update, i.e.,
\[
m_{k+1}
=
\beta_1m_k+(1-\beta_1)g_k+\gamma(g_k-g_{k-1}),
\]
we have
\begin{align}
r_{k+1}
&=
\nabla\mathcal{L}(\theta_k)
-
\beta_1m_k
-
(1-\beta_1+\gamma)g_k
+
\gamma g_{k-1} \notag\\
&=
\varepsilon_k-\beta_1\varepsilon_{k-1}
+
\beta_1r_k
+
(\beta_1-\gamma)(g_k-g_{k-1}).
\label{eq:app_residual_decomposition}
\end{align}
Define
\[
a_k:=\varepsilon_k-\beta_1\varepsilon_{k-1},
\qquad
b_k:=\beta_1 r_k,
\qquad
c_k:=(\beta_1-\gamma)(g_k-g_{k-1}).
\]
Then \(r_{k+1}=a_k+b_k+c_k\). By Young's inequality, i.e.,
\[
\|x+y\|^2
\leq
(1+\tau)\|x\|^2+\left(1+\frac{1}{\tau}\right)\|y\|^2 ,
\qquad \tau>0 .
\]
Taking \(x=b_k\), \(y=a_k+c_k\), and using \(\|a_k+c_k\|^2\leq 2\|a_k\|^2+2\|c_k\|^2\), it follows that
\begin{equation}
\mathbb{E}\|r_{k+1}\|^2
\leq
2(1+\frac{1}{\tau})\mathbb{E}\|\varepsilon_k-\beta_1\varepsilon_{k-1}\|^2
+
(1+\tau)\beta_1^2\mathbb{E}\|r_k\|^2
+
2(1+\frac{1}{\tau})(\beta_1-\gamma)^2
\mathbb{E}\|g_k-g_{k-1}\|^2 .
\label{eq:app_residual_young}
\end{equation}

For the first term on the right-hand side of \eqref{eq:app_residual_young}, we have
\begin{equation}
\mathbb{E}\|\varepsilon_k-\beta_1\varepsilon_{k-1}\|^2
=
\mathbb{E}\|\varepsilon_k\|^2
+
\beta_1^2\mathbb{E}\|\varepsilon_{k-1}\|^2
\leq
\frac{\sigma^2}{\vert \mathcal{B}_k\vert}
+
\frac{\sigma^2}{\vert \mathcal{B}_{k-1}\vert},
\label{eq:app_noise_pair_bound}
\end{equation}
where the first equality holds because $\mathbb{E}\langle \varepsilon_k,\varepsilon_{k-1} \rangle = 0$, and the last inequality follows from \eqref{eq:app_minibatch_variance} and \(\beta_1\in[0,1)\).

For the third term on the right-hand side of \eqref{eq:app_residual_young}, since
\[
g_k=\nabla\mathcal{L}(\theta_k)-\varepsilon_k,
\]
we have
\[
g_k-g_{k-1}
=
\nabla\mathcal{L}(\theta_k)-\nabla\mathcal{L}(\theta_{k-1})
-\varepsilon_k+\varepsilon_{k-1}.
\]
Using \(\|a+b+c\|^2\leq3\|a\|^2+3\|b\|^2+3\|c\|^2\), the smoothness of \(\mathcal{L}\), and \eqref{eq:app_minibatch_variance}, we obtain
\begin{align}
\mathbb{E}\|g_k-g_{k-1}\|^2
&\leq
3\mathbb{E}
\|\nabla\mathcal{L}(\theta_k)-\nabla\mathcal{L}(\theta_{k-1})\|^2
+
3\mathbb{E}\|\varepsilon_k\|^2
+
3\mathbb{E}\|\varepsilon_{k-1}\|^2 \notag\\
&\leq
3 L^2\mathbb{E}\|\theta_k-\theta_{k-1}\|^2
+
3\frac{\sigma^2}{\vert \mathcal{B}_k\vert}
+
3\frac{\sigma^2}{\vert \mathcal{B}_{k-1}\vert} .
\label{eq:app_gradient_difference_bound}
\end{align}

Combining \eqref{eq:app_residual_young}, \eqref{eq:app_noise_pair_bound}, and
\eqref{eq:app_gradient_difference_bound}, we obtain
\begin{equation}
\begin{aligned}
\mathbb{E}\|r_{k+1}\|^2
\leq&
(1+\tau)\beta_1^2\mathbb{E}\|r_k\|^2
+
6(1+\frac{1}{\tau})(\beta_1-\gamma)^2 L^2\mathbb{E}\|\theta_k-\theta_{k-1}\|^2\\
&+
\left[2(1+\frac{1}{\tau})+6(1+\frac{1}{\tau})(\beta_1-\gamma)^2\right]\sigma^2
\left(
\frac{1}{\vert \mathcal{B}_k\vert}+\frac{1}{\vert \mathcal{B}_{k-1}\vert}
\right),
\label{eq:app_residual_recursion}
\end{aligned}
\end{equation}

Rearranging \eqref{eq:app_residual_recursion} yields
\begin{equation}
\begin{aligned}
\mathbb{E}\|r_{k+1}\|^2-\mathbb{E}\|r_k\|^2
\leq
&
-C_r\mathbb{E}\|r_{k+1}\|^2 \\
&+
C_\theta
\mathbb{E}\|\theta_k-\theta_{k-1}\|^2
+
C_\varepsilon
\sigma^2
\left(
\frac{1}{\vert \mathcal{B}_k\vert}+\frac{1}{\vert \mathcal{B}_{k-1}\vert}
\right).
\end{aligned}
\label{eq:app_residual_estimate}
\end{equation}

where the constants are chosen as
\begin{equation}
C_r 
= 
\frac{1}{(1+\tau)\beta_1^2}-1
,
\qquad
C_\theta
=
\frac{6}{\tau\beta_1^2}(\beta_1-\gamma)^2 L^2,
\qquad
C_\varepsilon
=
\frac{2}{\tau}\left[1+3(\beta_1-\gamma)^2\right] .
\label{eq:app_residual_constants}
\end{equation}

It remains to bound the successive parameter displacement. From Algorithm~\ref{alg:radar}, we have
\begin{align}
\theta_{k+1}-\theta_k
&=
-\eta R_{k+1}^{-1}m_{k+1}
-
\ell R_{k+1}^{-1}(g_k-m_{k+1}) \notag\\
&=
(\eta-\ell)R_{k+1}^{-1}r_{k+1}
+
\ell R_{k+1}^{-1}\varepsilon_k
-
\eta R_{k+1}^{-1}\nabla\mathcal{L}(\theta_k).
\label{eq:app_step_decomposition}
\end{align}
By Assumption~\ref{ass:bounded_geometry}, we have
\[
\|x\|_{R_k^{-1}}^2\leq \nu_{\max}\|x\|^2,
\qquad
\|R_k^{-1}x\|^2\leq \nu_{\max}^2\|x\|^2 .
\]
Using Assumptions~\ref{ass:unbiased_variance} and \ref{ass:bounded_geometry}, we obtain
\begin{equation}
\mathbb{E}\|\theta_{k+1}-\theta_k\|^2
\leq
3(\eta-\ell)^2\nu_{\max}^2\mathbb{E}\|r_{k+1}\|^2
+
3\eta^2\nu_{\max}^2\mathbb{E}\|\nabla\mathcal{L}(\theta_k)\|^2
+
3\ell^2\nu_{\max}^2\frac{\sigma^2}{\vert \mathcal{B}_k\vert}.
\label{eq:app_step_estimate}
\end{equation}

\subsubsection{Proof of Lemma~\ref{lem:radar_lyapunov_descent}}

\begin{proof}
By the $ L$-smoothness of $\mathcal{L}$, we have
\begin{equation}
\mathcal{L}(\theta_{k+1})
\leq
\mathcal{L}(\theta_k)
+
\langle \nabla\mathcal{L}(\theta_k),\theta_{k+1}-\theta_k\rangle
+
\frac{ L}{2}\|\theta_{k+1}-\theta_k\|^2 .
\label{eq:app_smooth_descent}
\end{equation}
Substituting \eqref{eq:app_step_decomposition} into \eqref{eq:app_smooth_descent} gives
\begin{equation}
\begin{aligned}
\mathcal{L}(\theta_{k+1})
\leq
&
\mathcal{L}(\theta_k)
-
\eta\|\nabla\mathcal{L}(\theta_k)\|_{R_{k+1}^{-1}}^2 \\
&+
(\eta-\ell)
\langle\nabla\mathcal{L}(\theta_k),R_{k+1}^{-1}r_{k+1}\rangle
+
\ell
\langle\nabla\mathcal{L}(\theta_k),R_{k+1}^{-1}\varepsilon_k\rangle \\
&+
\frac{ L}{2}\|\theta_{k+1}-\theta_k\|^2 .
\end{aligned}
\label{eq:app_objective_descent_1}
\end{equation}
Using Young's inequality in the weighted norm and $\eta \geq l$, i.e.,
\[
\langle a,R_{k+1}^{-1}b\rangle
\leq
\frac{1}{2}\|a\|_{R_{k+1}^{-1}}^2
+
\frac{1}{2}\|b\|_{R_{k+1}^{-1}}^2 ,
\]
we have
\[
(\eta-\ell)
\langle\nabla\mathcal{L}(\theta_k),R_{k+1}^{-1}r_{k+1}\rangle
\leq
\frac{\eta-\ell}{2}
\|\nabla\mathcal{L}(\theta_k)\|_{R_{k+1}^{-1}}^2
+
\frac{\eta-\ell}{2}
\|r_{k+1}\|_{R_{k+1}^{-1}}^2 ,
\]
and
\[
\ell
\langle\nabla\mathcal{L}(\theta_k),R_{k+1}^{-1}\varepsilon_k\rangle
\leq
\frac{\ell}{2}
\|\nabla\mathcal{L}(\theta_k)\|_{R_{k+1}^{-1}}^2
+
\frac{\ell}{2}
\|\varepsilon_k\|_{R_{k+1}^{-1}}^2 .
\]
Substituting these bounds into \eqref{eq:app_objective_descent_1} yields
\begin{equation}
\begin{aligned}
\mathcal{L}(\theta_{k+1})
\leq
&
\mathcal{L}(\theta_k)
-
\frac{\eta}{2}
\|\nabla\mathcal{L}(\theta_k)\|_{R_{k+1}^{-1}}^2
+
\frac{\eta-\ell}{2}
\|r_{k+1}\|_{R_{k+1}^{-1}}^2 \\
&+
\frac{\ell}{2}
\|\varepsilon_k\|_{R_{k+1}^{-1}}^2
+
\frac{ L}{2}
\|\theta_{k+1}-\theta_k\|^2 .
\end{aligned}
\label{eq:app_objective_descent_2}
\end{equation}
By Assumption~\ref{ass:bounded_geometry} and \eqref{eq:app_minibatch_variance}, we have
\[
\|\nabla\mathcal{L}(\theta_k)\|_{R_{k+1}^{-1}}^2
\geq
\nu_{\min}
\|\nabla\mathcal{L}(\theta_k)\|^2,
\]
\[
\|r_{k+1}\|_{R_{k+1}^{-1}}^2
\leq
\nu_{\max}
\|r_{k+1}\|^2,
\]
and
\[
\mathbb{E}\|\varepsilon_k\|_{R_{k+1}^{-1}}^2
\leq
\nu_{\max}\mathbb{E}\|\varepsilon_k\|^2
\leq
\nu_{\max}\frac{\sigma^2}{\vert \mathcal{B}_k\vert}.
\]
Taking expectation in \eqref{eq:app_objective_descent_2}, we obtain
\begin{equation}
\begin{aligned}
\mathbb{E}\mathcal{L}(\theta_{k+1})
-
\mathbb{E}\mathcal{L}(\theta_k)
\leq
&
-\frac{\eta\nu_{\min}}{2}
\mathbb{E}\|\nabla\mathcal{L}(\theta_k)\|^2
+
\frac{(\eta-\ell)\nu_{\max}}{2}
\mathbb{E}\|r_{k+1}\|^2 \\
&+
\frac{ L}{2}
\mathbb{E}\|\theta_{k+1}-\theta_k\|^2
+
\frac{\ell\nu_{\max}}{2}
\frac{\sigma^2}{\vert \mathcal{B}_k\vert}.
\end{aligned}
\label{eq:app_objective_descent_final}
\end{equation}

Combining \eqref{eq:app_objective_descent_final} with the residual estimate
\eqref{eq:app_residual_estimate}, we get
\begin{equation}
\begin{aligned}
\mathbb{E}V_{k+1}-\mathbb{E}V_k
\leq
&
-\frac{\eta\nu_{\min}}{2}
\mathbb{E}\|\nabla\mathcal{L}(\theta_k)\|^2 \\
&-
\left[
c_1 C_r
-
\frac{(\eta-\ell)\nu_{\max}}{2}
\right]
\mathbb{E}\|r_{k+1}\|^2 \\
&-
(c_2-c_1C_\theta)
\mathbb{E}\|\theta_k-\theta_{k-1}\|^2
+
\left(c_2+\frac{ L}{2}\right)
\mathbb{E}\|\theta_{k+1}-\theta_k\|^2 \\
&+
\left(c_1C_\varepsilon
+
\frac{\ell\nu_{\max}}{2}\right)
\frac{\sigma^2}{\vert \mathcal{B}_k\vert}
+
c_1C_\varepsilon
\frac{\sigma^2}{\vert \mathcal{B}_{k-1}\vert},
\end{aligned}
\label{eq:app_lyapunov_before_step}
\end{equation}

Next, substituting \eqref{eq:app_step_estimate} into \eqref{eq:app_lyapunov_before_step} yields
\begin{equation}
\begin{aligned}
\mathbb{E}V_{k+1}-\mathbb{E}V_k
\leq
&
-d_1\mathbb{E}\|r_{k+1}\|^2
-
d_2\mathbb{E}\|\nabla\mathcal{L}(\theta_k)\|^2 \\
&+
(c_1C_\theta-c_2)
\mathbb{E}\|\theta_k-\theta_{k-1}\|^2
+
C_{v,+}
\frac{\sigma^2}{\vert \mathcal{B}_k\vert}
+
C_{v,-}
\frac{\sigma^2}{\vert \mathcal{B}_{k-1}\vert},
\end{aligned}
\label{eq:app_lyapunov_with_step}
\end{equation}
where
\begin{equation}
d_1
:=
c_1C_r
-
\frac{(\eta-\ell)\nu_{\max}}{2}
-
3\left(c_2+\frac{ L}{2}\right)
(\eta-\ell)^2\nu_{\max}^2,
\label{eq:app_d1_explicit}
\end{equation}
\begin{equation}
d_2
:=
\frac{\eta\nu_{\min}}{2}
-
3\left(c_2+\frac{ L}{2}\right)
\eta^2\nu_{\max}^2,
\label{eq:app_d2_explicit}
\end{equation}
and
\begin{equation}
C_{v,+}
:=
c_1C_\varepsilon
+
\frac{\ell\nu_{\max}}{2}
+
3\left(c_2+\frac{ L}{2}\right)\ell^2\nu_{\max}^2,
\qquad
C_{v,-}
:=
c_1C_\varepsilon .
\label{eq:app_Cv_explicit}
\end{equation}
Choose \(c_1,c_2>0\) and the algorithmic parameters in a stable regime such that
\[
c_1C_\theta-c_2\leq 0,
\qquad
d_1>0,
\qquad
d_2>0,
\qquad
\eta \geq l.
\]
Then the nonpositive term \((c_1C_\theta-c_2)\mathbb{E}\|\theta_k-\theta_{k-1}\|^2\) can be dropped from the upper bound. Define
\[
C_v:=\max\{C_{v,+},C_{v,-}\}>0 .
\]
We finally obtain
\begin{equation}
\mathbb{E}V_{k+1}-\mathbb{E}V_k
\leq
-d_1\mathbb{E}\|r_{k+1}\|^2
-
d_2\mathbb{E}\|\nabla\mathcal{L}(\theta_k)\|^2 
+
C_v\sigma^2
\left(
\frac{1}{\vert \mathcal{B}_k\vert}
+
\frac{1}{\vert \mathcal{B}_{k-1}\vert}
\right).
\label{eq:app_lyapunov_descent_upper}
\end{equation}
This proves Lemma~\ref{lem:radar_lyapunov_descent}.
\end{proof}

\subsubsection{Proof of Theorem~\ref{thm:radar_convergence}}

\begin{proof}
From \eqref{eq:app_lyapunov_descent_upper}, we have
\begin{equation}
d_1\mathbb{E}\|r_{k+1}\|^2
+
d_2\mathbb{E}\|\nabla\mathcal{L}(\theta_k)\|^2
\leq
\mathbb{E}V_k-\mathbb{E}V_{k+1}
+
C_v\sigma^2
\left(
\frac{1}{\vert \mathcal{B}_k\vert}
+
\frac{1}{\vert \mathcal{B}_{k-1}\vert}
\right).
\label{eq:app_one_step_for_theorem}
\end{equation}
Since \(d_1\mathbb{E}\|r_{k+1}\|^2\geq 0\), dropping it gives
\begin{equation}
d_2\mathbb{E}\|\nabla\mathcal{L}(\theta_k)\|^2
\leq
\mathbb{E}V_k-\mathbb{E}V_{k+1}
+
C_v\sigma^2
\left(
\frac{1}{\vert \mathcal{B}_k\vert}
+
\frac{1}{\vert \mathcal{B}_{k-1}\vert}
\right).
\label{eq:app_gradient_one_step}
\end{equation}
Summing \eqref{eq:app_gradient_one_step} from \(k=0\) to \(T-1\) gives
\begin{equation}
d_2
\sum_{k=0}^{T-1}
\mathbb{E}\|\nabla\mathcal{L}(\theta_k)\|^2
\leq
\mathbb{E}V_0-\mathbb{E}V_T
+
C_v\sigma^2
\sum_{k=0}^{T-1}
\left(
\frac{1}{\vert \mathcal{B}_k\vert}
+
\frac{1}{\vert \mathcal{B}_{k-1}\vert}
\right).
\label{eq:app_telescoping}
\end{equation}
Since \(V_T\geq \mathcal{L}(\theta_T)\geq \mathcal{L}_*\), we have
\[
\mathbb{E}V_0-\mathbb{E}V_T
\leq
\mathbb{E}V_0-\mathcal{L}_* .
\]
Therefore,
\begin{equation}
d_2
\sum_{k=0}^{T-1}
\mathbb{E}\|\nabla\mathcal{L}(\theta_k)\|^2
\leq
\mathbb{E}V_0-\mathcal{L}_*
+
C_v\sigma^2
\sum_{k=0}^{T-1}
\left(
\frac{1}{\vert \mathcal{B}_k\vert}
+
\frac{1}{\vert \mathcal{B}_{k-1}\vert}
\right).
\label{eq:app_telescoping_final}
\end{equation}
Dividing both sides by \(d_2T\) yields
\begin{equation}
\frac{1}{T}
\sum_{k=0}^{T-1}
\mathbb{E}\|\nabla\mathcal{L}(\theta_k)\|^2
\leq
\frac{\mathbb{E}V_0-\mathcal{L}_*}{d_2T}
+
\frac{C_v\sigma^2}{d_2T}
\sum_{k=0}^{T-1}
\left(
\frac{1}{\vert \mathcal{B}_k\vert}
+
\frac{1}{\vert \mathcal{B}_{k-1}\vert}
\right),
\end{equation}
which proves Theorem~\ref{thm:radar_convergence}.
\end{proof}

\section{Additional Details and Results on Numerical Experiments}
\label{app:extra_experiments}
\subsection{Learning-Rate Search Protocol}
\label{app:learning-rate_search_protocol}
Following the optimizer-specific recommendations, the initial learning rate of
Adan was set to $2.5$ times the task-specific base learning rate, while that of
Lion was set to $0.1$ times the base learning rate. All other optimizers used
the task-specific base learning rate directly.

For each optimizer with an initial task-specific learning rate $\eta_0$, we
evaluated the following candidate set:
\begin{equation}
\left\{
0.1\eta_0,\;
0.5\eta_0,\;
\eta_0,\;
5\eta_0,\;
10\eta_0
\right\}.
\label{eq:lr_candidates}
\end{equation}

The initial learning rates are summarized in
Table~\ref{tab:initial_learning_rates}.

\begin{table*}[t]
\centering
\caption{Initial learning rates used in the learning-rate search.}
\label{tab:initial_learning_rates}

\resizebox{\textwidth}{!}{
\begin{tabular}{lcccccccc}
\toprule
Task & RADAR & RAD & Adam & AdamW & NAdam & Adan & AdaBelief & Lion \\
\midrule

CIFAR-10, ViT (6.3M)
& $1\times10^{-3}$
& $1\times10^{-3}$
& $1\times10^{-3}$
& $1\times10^{-3}$
& $1\times10^{-3}$
& $2.5\times10^{-3}$
& $1\times10^{-3}$
& $1\times10^{-4}$
\\

CIFAR-100, ResNet-50
& $1\times10^{-3}$
& $1\times10^{-3}$
& $1\times10^{-3}$
& $1\times10^{-3}$
& $1\times10^{-3}$
& $2.5\times10^{-3}$
& $1\times10^{-3}$
& $1\times10^{-4}$
\\

Wikitext-103, GPT-2 Small
& $3\times10^{-4}$
& $3\times10^{-4}$
& $3\times10^{-4}$
& $3\times10^{-4}$
& $3\times10^{-4}$
& $7.5\times10^{-4}$
& $3\times10^{-4}$
& $3\times10^{-5}$
\\

FineWeb-Edu-10B, Qwen2.5-0.5B
& $3\times10^{-4}$
& $3\times10^{-4}$
& $3\times10^{-4}$
& $3\times10^{-4}$
& $3\times10^{-4}$
& $7.5\times10^{-4}$
& $3\times10^{-4}$
& $3\times10^{-5}$
\\

WikiText-2 SFT, GPT-2 Small
& $1\times10^{-4}$
& $1\times10^{-4}$
& $1\times10^{-4}$
& $1\times10^{-4}$
& $1\times10^{-4}$
& $2.5\times10^{-4}$
& $1\times10^{-4}$
& $1\times10^{-5}$
\\

\bottomrule
\end{tabular}
}
\end{table*}

Learning-rate tuning was performed using an independent tuning seed
($s=5$). Each optimizer was then trained again from scratch using the selected
learning rate and five independent final seeds:
\begin{equation}
\{0,1,2,3,4\}.
\end{equation}

For all tasks, the learning rate was selected using the average of the final
three validation measurements, thereby reducing sensitivity to fluctuations at
a single evaluation point. Specifically, GPT-2 pre-training evaluated each
candidate for 1,000 optimizer steps and recorded the validation loss every 100
steps, with the measurements at steps 800, 900, and 1,000 used for selection.
The ViT and ResNet-50 tasks evaluated each candidate for 10 epochs and used the
validation losses from the final three epochs. GPT-2 fine-tuning evaluated each
candidate for one complete epoch and used the final three validation perplexity
measurements collected during that epoch.

More generally, for a task-specific validation metric
$\mathcal{M}_{\mathrm{val}}$, the selection score was defined as
\begin{equation}
S(\eta)
=
\frac{1}{3}
\sum_{j=M-2}^{M}
\mathcal{M}_{\mathrm{val}}^{(j)}(\eta),
\label{eq:lr_selection_score}
\end{equation}
where $M$ is the total number of validation measurements. Validation loss was
used for GPT-2 pre-training and the two vision tasks, while validation
perplexity was used for GPT-2 fine-tuning. The candidate learning rate with the
lowest selection score was selected.

Within each task, all learning-rate trials used the same tuning seed, model
initialization, data split and data order, batch configuration, warmup setting,
weight-decay setting, and the same initial portion of the learning-rate
schedule used in the final training runs.

\subsection{Hyperparameters of All Experiments}
\label{app:Hyperparameters}

Table~\ref{tab:selected_learning_rates} summarizes the learning rates used in the reported experiments. Table~\ref{tab:Hyperparameters} summarizes the RADAR-specific hyperparameters used in the reported experiments. We select  $\beta_{1} = 0.9$ across tasks and optimizers,and $\delta = 1$, $\zeta = 1e-16$ in all the experiments for RADAR and RAD.For the second-moment coefficient $\beta_2$, the default values, either $0.999$ or $0.99$, are used for all tasks except Qwen, for which all optimizers use $0.95$.For the choice of $\gamma$ and $l$, we set $l=0.01\eta$, where $\eta$ denotes the learning rate. We use $\gamma=0.01$ for the reinforcement learning and Qwen tasks, and $\gamma=0.1$ for all other tasks.

\begin{table*}[t]
\centering
\caption{Selected learning rates across tasks and optimizers.}
\label{tab:selected_learning_rates}

\resizebox{\textwidth}{!}{
\begin{tabular}{lcccccccc}
\toprule
Task & RADAR & RAD & Adam & AdamW & NAdam & Adan & AdaBelief & Lion \\
\midrule

CIFAR-10, ViT (6.3M)
& $5\times10^{-4}$
& $5\times10^{-4}$
& $5\times10^{-4}$
& $5\times10^{-4}$
& $5\times10^{-4}$
& $1.25\times10^{-3}$
& $5\times10^{-4}$
& $1\times10^{-4}$
\\

CIFAR-100, ResNet-50
& $1\times10^{-3}$
& $5\times10^{-3}$
& $1\times10^{-3}$
& $5\times10^{-3}$
& $1\times10^{-3}$
& $2.5\times10^{-3}$
& $5\times10^{-3}$
& $5\times10^{-4}$
\\

\midrule

WikiText-103, GPT-2 Small
& $1.5\times10^{-3}$
& $3\times10^{-4}$
& $3\times10^{-4}$
& $3\times10^{-4}$
& $1.5\times10^{-3}$
& $3.75\times10^{-3}$
& $3\times10^{-4}$
& $1.5\times10^{-4}$
\\

FineWeb-Edu-10B, Qwen2.5-0.5B
& $1.5\times10^{-3}$
& $1.5\times10^{-3}$
& $1.5\times10^{-3}$
& $1.5\times10^{-3}$
& $1.5\times10^{-3}$
& $3.75\times10^{-3}$
& $3\times10^{-4}$
& $1.5\times10^{-4}$
\\

WikiText-2 SFT, GPT-2 Small
& $1\times10^{-4}$
& $1\times10^{-4}$
& $1\times10^{-4}$
& $1\times10^{-4}$
& $1\times10^{-4}$
& $2.5\times10^{-4}$
& $1\times10^{-4}$
& $1\times10^{-4}$
\\

\midrule

HalfCheetah-v4, SAC
& $1\times10^{-3}$
& $1\times10^{-3}$
& $1\times10^{-3}$
& $1\times10^{-3}$
& $1\times10^{-3}$
& $2.5\times10^{-3}$
& $1\times10^{-3}$
& $1\times10^{-4}$
\\

Walker2d-v4, SAC
& $1\times10^{-3}$
& $1\times10^{-3}$
& $1\times10^{-3}$
& $1\times10^{-3}$
& $1\times10^{-3}$
& $2.5\times10^{-3}$
& $1\times10^{-3}$
& $1\times10^{-4}$
\\

\bottomrule
\end{tabular}
}
\end{table*}

\begin{table*}[t]
\centering
\caption{RADAR hyperparameters used in the reported experiments.}
\label{tab:Hyperparameters}
\small
\begin{tabular}{lccccc}
\toprule
Setting & $\eta$ & $\beta_1$ & $\beta_2$ & $\gamma$ & $\ell$ \\
\midrule
CIFAR-10, ViT (6.3M)
& $5\mathrm{e}{-4}$ & 0.9 & 0.999 & 0.1 & $5\mathrm{e}{-6}$ \\

CIFAR-100, ResNet-50
& $1\mathrm{e}{-3}$ & 0.9 & 0.999 & 0.1 & $1\mathrm{e}{-5}$ \\

\midrule

WikiText-103, GPT-2 Small
& $1.5\mathrm{e}{-3}$ & 0.9 & 0.999 & 0.1 & $1.5\mathrm{e}{-5}$ \\

FineWeb-Edu-10B, Qwen2.5-0.5B
& $1.5\mathrm{e}{-3}$ & 0.9 & 0.95 & 0.01 & $1.5\mathrm{e}{-5}$ \\

WikiText-2 SFT, GPT-2 Small
& $1\mathrm{e}{-4}$ & 0.9 & 0.999 & 0.1 & $1\mathrm{e}{-6}$ \\

\midrule

HalfCheetah-v4, SAC
& $1\mathrm{e}{-3}$ & 0.9 & 0.999 & 0.01 & $1\mathrm{e}{-5}$ \\

Walker2d-v4, SAC
& $1\mathrm{e}{-3}$ & 0.9 & 0.999 & 0.01 & $1\mathrm{e}{-5}$ \\

\bottomrule
\end{tabular}
\end{table*}

\subsection{Language Modeling}
\label{sec:language}

We provide implementation details and additional training curves for the GPT-2 language modeling experiments described in Section~\ref{subsec:experimental_setup}. We consider two settings: pre-training GPT-2 from scratch on WikiText-103 and fine-tuning a pretrained GPT-2 on WikiText-2. Both experiments are formulated as causal language modeling tasks. The detailed settings are summarized in Table~\ref{tab:lm_hyperparameters}.

\begin{table}[t]
\centering
\caption{Hyperparameter settings for language modeling experiments.}
\label{tab:lm_hyperparameters}
\small
\resizebox{\columnwidth}{!}{
\begin{tabular}{lccc}
\toprule
Setting 
& WikiText-103 + GPT-2 Small
& FineWeb-Edu-10B + Qwen2.5-0.5B
& WikiText-2 + GPT-2 Small\\
\midrule

Training type 
& Pre-training from scratch 
& Pre-training from scratch
& Fine-tuning \\

Training budget
& 10,000 steps
& 1B tokens (1,908 steps)
& 3 epochs \\

Sequence length / block size 
& 512 
& 2048
& 512 \\

Per-device batch size 
& 16 
& 4
& 16 \\

Gradient accumulation 
& 4 
& 16
& 4 \\

Effective batch size 
& 64 
& 256
& 64 \\

LR scheduler 
& Cosine 
& Cosine
& Cosine \\

Warmup 
& 500 steps 
& 40 steps
& 0 steps \\

Max gradient norm 
& 1.0 
& 1.0
& 1.0 \\

Precision 
& FP16 
& BF16
& FP16 \\

Logging frequency 
& Every 10 steps 
& Every 10 steps
& Every step \\

Evaluation frequency 
& Every 500 steps 
& Every 100 steps
& Every 10 steps \\

\bottomrule
\end{tabular}
}
\end{table}

\paragraph{WikiText-103 pre-training.}
We pre-train a GPT-2 model from scratch on WikiText-103. The model is initialized from the GPT-2 configuration without loading pretrained weights, while the GPT-2 tokenizer is used for tokenization. Empty lines are removed from the raw corpus, and the remaining text is tokenized, concatenated, and split into fixed-length blocks of 512 tokens. The model is trained for 10,000 update steps with an effective batch size of 64, FP16 mixed precision, a cosine learning-rate scheduler with 500 warmup steps, and gradient clipping with maximum norm 1.0.

\paragraph{FineWeb-Edu-10B pre-training.}
We further evaluate large-scale language-model pre-training by training
Qwen2.5-0.5B from scratch on a 10B-token subset of
FineWeb-Edu. The model is initialized from the Qwen2.5-0.5B configuration without loading pretrained weights, while the corresponding tokenizer is used for tokenization. The training corpus is tokenized, concatenated, and split into fixed-length blocks of 2,048 tokens. We train the model for a total budget of 1B tokens on 4$\times$ NVIDIA A100 GPUs, with a per-device batch size of 4 and 16 gradient-accumulation steps, resulting in an effective batch size of 256. Training uses BF16 mixed precision, a cosine learning-rate scheduler with 40 warmup steps, and gradient clipping with a maximum norm of 1.0. Validation loss is evaluated every 100 update steps.

\paragraph{WikiText-2 fine-tuning.}
We fine-tune a pretrained GPT-2 model on the WikiText-2 raw dataset. The input texts are tokenized using the GPT-2 tokenizer, concatenated, and split into blocks of 512 tokens. The model is trained for 3 epochs with an effective batch size of 64, FP16 mixed precision, a cosine learning-rate scheduler without warmup, and gradient clipping with maximum norm 1.0. Validation loss is evaluated every 10 steps.

\subsection{Wall-Clock Time and GPU Memory Overhead on GPT-2 Pre-Training}

\begin{table}[t]
\centering
\caption{Wall-clock time and peak GPU memory usage on GPT-2 pre-training. Results are averaged over five random seeds.}
\label{tab:runtime_memory_overhead}
\begin{tabular}{lcc}
\toprule
Optimizer
& Total Wall Time (s)
& Peak CUDA Allocated (GiB) \\
\midrule
RADAR & 4727.2583 & 13.2706 \\
AdamW & 4496.5346 & 12.8053 \\
NAdam & 4500.5386 & 12.8053 \\
\bottomrule
\end{tabular}
\end{table}

Compared with AdamW and NAdam, RADAR increases the total wall-clock time by
approximately $5.13\%$ and $5.04\%$, respectively. Its peak CUDA-allocated
memory is $13.2706$ GiB, corresponding to an additional $0.4653$ GiB, or
approximately $3.63\%$, relative to both AdamW and NAdam. These results suggest
that RADAR introduces only modest wall-clock-time and GPU-memory overhead.

% \subsection{Ablation Study}
% \label{subsec:ablation}

% We conducted ablation studies on CIFAR-10 with ViT to isolate the effects of the two RADAR-specific components: decoupled residual correction (DRC) and gradient-difference momentum filtering (GDF). The coefficient $\ell$ controls the strength of DRC, while $\gamma$ controls the strength of SF. All ablated variants use the same training setting as the CIFAR-10 ViT experiment.

% Figure~\ref{fig:ablation_study_vit} compares the entire RADAR optimizer with three variants: RADAR without SF, RADAR without DRC and RADAR without both components. The full RADAR achieves the lowest training and test losses in the late training stage, showing that the two components jointly improve optimization performance. Removing SF leads to a clear degradation in both the training and the test loss, suggesting that the gradient-difference term improves the momentum estimate and stabilizes the update direction. Removing DRC also weakens the performance, confirming that the residual correction contributes beyond the relativistic adaptive geometry alone. The variant without both components performs worst, further supporting the combined design of RADAR.

% \begin{figure*}[t]
%     \centering
%     \includegraphics[width=\linewidth]{Aba.pdf}
%     \caption{
%     Ablation study of RADAR on CIFAR-10 with ViT (6.3M). 
%     We compare the full optimizer with variants removing second-order momentum filtering (w/o SF), decoupled residual correction (w/o DRC), and both components.
%     }
%     \label{fig:ablation_study_vit}
% \end{figure*}

\subsection{Sensitivity}

We investigate the sensitivity of RADAR to its main hyperparameters. In particular, we study the interaction between the two correction coefficients, $\ell$ and $\gamma$, through a $5\times5$ grid search on WikiText-2 with GPT-2 fine-tuning. We additionally vary the relativistic-geometry parameters $\delta$ and $\zeta$ separately around their default values. The goal is to assess whether the hyperparameters used in the main experiments lie within broad and stable regions rather than near narrowly tuned task-specific optima.

\begin{figure*}[t]
\centering
\includegraphics[width=\linewidth]{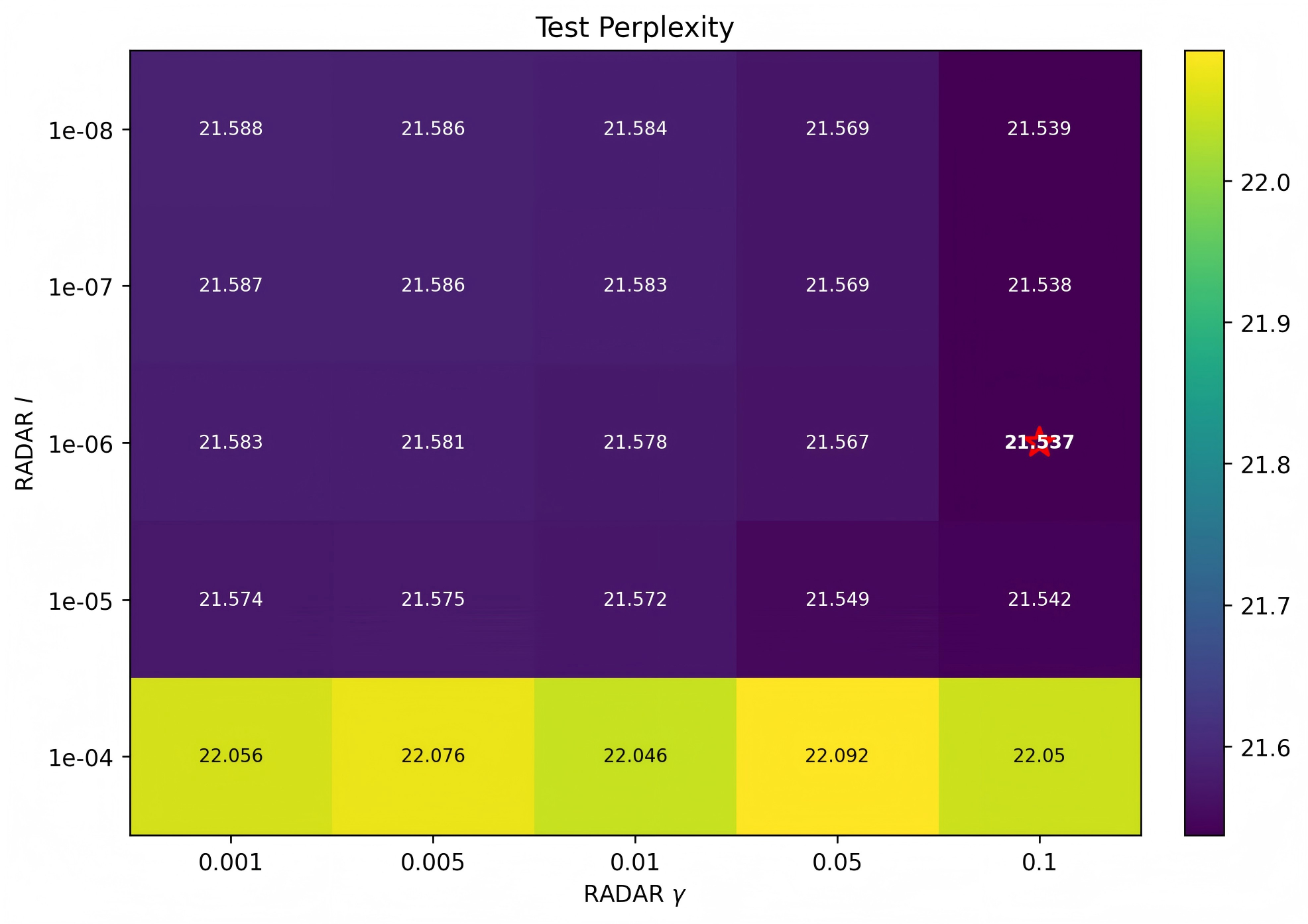}
\caption{
Joint sensitivity analysis of $\ell$ and $\gamma$ on WikiText-2 with GPT-2 fine-tuning. Each cell reports test perplexity (PPL), where lower is better. RADAR remains stable over a broad range of $\ell$ and $\gamma$, while performance degrades when $\ell$ becomes comparable to the learning-rate scale.
}
\label{fig:sensitivity}
\end{figure*}

Figure~\ref{fig:sensitivity} shows the joint sensitivity of $\ell$ and $\gamma$. For $\ell\in[10^{-8},10^{-5}]$, the test perplexity varies only moderately across a wide range of $\gamma$, with particularly stable performance for $\gamma\in[0.1,0.2]$. The best result, a test perplexity of $21.5260$, is obtained at $\ell=10^{-6}$ and $\gamma=0.2$. Importantly, the default configuration used in our experiments, $\ell=0.01\eta_0=10^{-6}$ and $\gamma=0.1$, also lies within this low-perplexity region rather than at an isolated optimum. Increasing $\ell$ to $10^{-4}$, which is on the same scale as the learning rate $\eta_0$, leads to a clear degradation across all values of $\gamma$. This suggests that the residual correction is robust when kept sufficiently smaller than the primary optimization step, while overly large correction magnitudes can adversely affect training.

We observe similar robustness with respect to the relativistic-geometry parameters. Varying $\delta$ by $\pm 10\%$ around its default value of $1$ changes the test perplexity by at most $0.0595$, corresponding to less than $0.28\%$, and the method remains stable over the broader range $\delta\in[0.8,1.2]$. Likewise, varying $\zeta$ from $10^{-18}$ to $10^{-14}$ changes the test perplexity by less than $0.01$, or below $0.05\%$, indicating that performance is largely insensitive to the default choice $\zeta=10^{-16}$.

Overall, these results indicate that RADAR does not rely on narrowly tuned hyperparameters. The default choices of $\ell$, $\gamma$, $\delta$, and $\zeta$ all lie within stable performance regions and are therefore fixed, rather than retuned for individual random seeds, in the five-seed main experiments.

\end{document}